\definecolor{cvprblue}{rgb}{0.21,0.49,0.74}
\newtheorem{theorem}{Theorem}
\newtheorem{lemma}{Lemma}
\newtheorem{definition}{Definition}
\newtheorem{corollary}{Corollary}
\newtheorem{proposition}{Proposition}
\newtheorem{claim}{Claim}
\newtheorem{remark}{Remark}
\newtheorem{assumption}{Assumption}
\DeclareMathOperator*{\argmin}{arg\,min}
\newcommand{\Tr}{\operatorname{Tr}}
\title{Where Did Your Model Learn That? Label-free \\Influence for Self-supervised Learning}
\author{
  \hspace{5mm}Nidhin Harilal$^{1}$\thanks{Equal contribution}\footnotemark[1] \hspace{5mm}  
  \and
  \hspace{5mm}Amit Kiran Rege$^{1}$\footnotemark[1] \hspace{5mm}
  \and
  \hspace{5mm}Reza Akbarian Bafghi$^{1}$  \hspace{5mm}\vspace{0.5mm} 
  \and
  Maziar Raissi$^{2}$ 
  \and
Claire Monteleoni$^{1,3}$\vspace{1mm} 
\and
  $^{1}$University of Colorado Boulder,
  $^{2}$University of California, Riverside, $^{3}$INRIA Paris
  \and 
  {\tt\small
\{nidhin.harilal, amit.rege, reza.akbarianbafghi, cmontel\}@colorado.edu 
}\\{\tt\small maziar.raissi@ucr.edu}
}
\begin{document}
\maketitle
\begin{abstract}
Self-supervised learning (SSL) has revolutionized learning from large-scale unlabeled datasets, yet the intrinsic relationship between pretraining data and the learned representations remains poorly understood. Traditional supervised learning benefits from gradient-based data attribution tools like influence functions that measure the contribution of an individual data point to model predictions. However, existing definitions of influence rely on labels, making them unsuitable for SSL settings. We address this gap by introducing Influence-SSL, a novel and label-free approach for defining influence functions tailored to SSL. Our method harnesses the stability of learned representations against data augmentations to identify training examples that help explain model predictions. We provide both theoretical foundations and empirical evidence to show the utility of Influence-SSL in analyzing pre-trained SSL models. Our analysis reveals notable differences in how SSL models respond to influential data compared to supervised models. Finally, we validate the effectiveness of Influence-SSL through applications in duplicate detection, outlier identification and fairness analysis. Code is available at: \url{https://github.com/cryptonymous9/Influence-SSL}.
\end{abstract}

\section{Introduction}


\begin{figure}
    \centering
    \includegraphics[width=\linewidth]{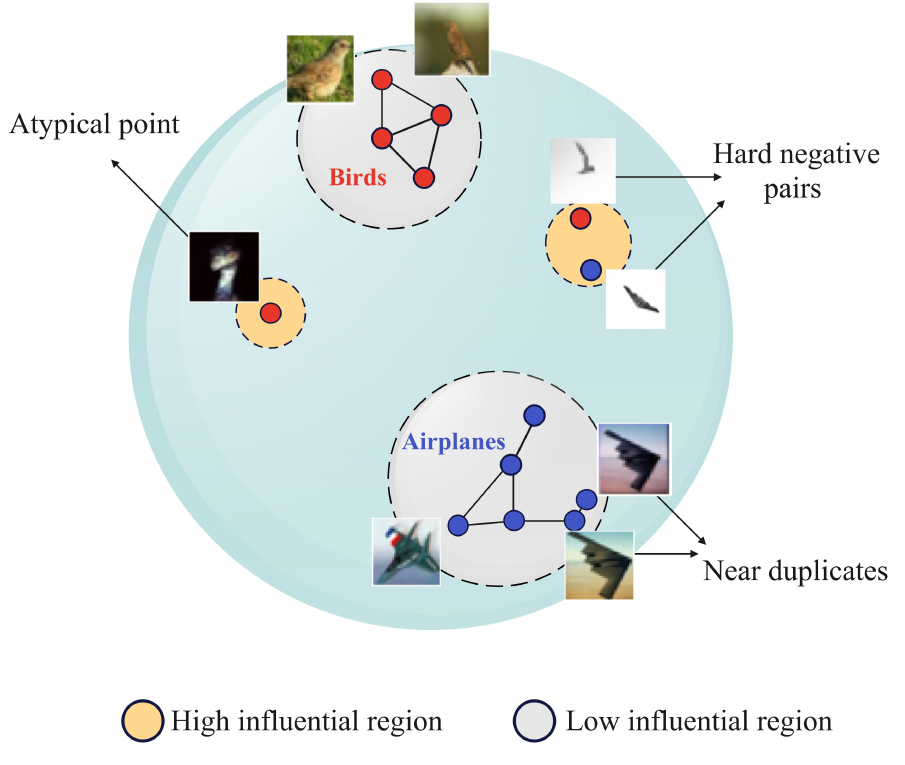}
    \caption{Illustration of scenarios for varying influence levels for self-supervised learning: (1) Hard negative examples, which challenge the model's decision boundary, should have high influence. (2) Atypical examples, representing rare or outlier data, should also exhibit high influence. (3) Near-duplicate examples should maintain low influence due to their redundant nature and low contribution to learning.}
    \label{fig:intro}
\end{figure}

\textit{Influence functions}~\cite{koh2017understanding} have proven valuable in supervised learning for assessing the impact of individual training examples on model behavior, providing insights into data memorization and learning dynamics~\cite{feldman2020does, bae2022if, grosse2023studying, zhang2022rethinking}. However, extending this concept to SSL presents unique challenges, as existing definitions rely heavily on labeled data and often require computationally expensive model retraining~\cite{feldman2020does, guu2023simfluence}.  Unlike supervised learning, where labels provide a direct link between training examples and model behavior, SSL tasks often involve pretext objectives that obscure how specific data points influence model training and performance. This makes extending influence estimation to SSL particularly challenging. This demands the development of a new theoretical framework for influence estimation tailored to SSL, one that can account for the distinct characteristics of SSL learning objective.

The importance of understanding data influence in SSL extends beyond theoretical interest; it has practical implications for data curation, robustness analysis, and debiasing~\cite{bae2022if}. By quantifying the influence of training instances, practitioners can identify harmful or overly influential examples, leading to more stable and interpretable learning processes. Yet, the absence of labels and the complexity of pretext tasks mean traditional influence estimation techniques, which depend on gradient-based methods or retraining schemes~\cite{koh2017understanding}, are not directly applicable~\cite{harilal2024influence}.


We introduce \textit{Influence-SSL}, a novel framework for analyzing how training examples shape self-supervised representations. Our approach reformulates influence functions for the SSL setting by leveraging the fundamental invariance-distinctiveness trade-off inherent in SSL objectives. Rather than relying on labels, we quantify influence through representation stability against data augmentations, providing a natural measure aligned with SSL training objectives. We provide some intuition on influential examples computed using Influence-SSL in Figure~\ref{fig:intro}.

We provide both a theoretical angle and empirical validation for understanding SSL dynamics using influence analysis. We show that Influence-SSL holds several mathematical properties required from influence estimation in a simplified setting. Additionally, we establish the connection between influence scores, representation structure, and augmentation consistency. Comprehensive experiments across various SSL frameworks confirm that our influence measures offer consistent and interpretable insights into model behavior. These insights enable practical applications such as duplicate detection, outlier identification, and fairness analysis. Our results indicate that influence analysis can uncover significant patterns in how SSL methods process training data, with potential benefits for enhancing model robustness and transparency.

\section{Background}

\subsection{Classical Influence Functions}

In this section, we briefly review influence functions~\cite{koh2017understanding, bae2022if}, which are a classical technique from robust statistics~\cite{hampel1974influence} that was recently introduced to deep learning by~\cite{koh2017understanding} to help analyze how individual training examples affect a model's learned parameters and predictions.

Consider a dataset $S = \{z_i\}_{i=1}^n$ where each $z_i$ may represent either a single sequence (for self-supervised learning) or an input-target pair $z_i = (x_i, y_i)$ (for supervised learning). The model parameters $\theta \in \mathbb{R}^d$ are learned through empirical risk minimization of a loss function $\mathcal{L}$:
\begin{equation}
\theta^* = \argmin_{\theta \in \mathbb{R}^d} \frac{1}{n}\sum_{i=1}^n \mathcal{L}(z_i, \theta)
\end{equation}

To understand the effect of removing a new training example $z_m$ to the dataset, we can parameterize the training objective by the weight $\epsilon \in \mathbb{R}$ given to this example. This yields the response function:
\begin{align*}
\theta^*(\epsilon) &= \argmin_{\theta \in \mathbb{R}^d} J(\theta, S_{-\epsilon}) \\
&= \argmin_{\theta \in \mathbb{R}^d} \frac{1}{n}\sum_{i=1}^n \mathcal{L}(z_i, \theta) - \epsilon\mathcal{L}(z_m, \theta)
\end{align*}

The influence of $z_m$ on $\theta^*$ is defined as the first-order Taylor approximation to the response function at $\epsilon = 0$. Under appropriate regularity conditions, the Implicit Function Theorem yields:
\begin{equation}
\mathcal{I}_{\theta^*}(z_m) = \frac{d\theta^*}{d\epsilon}\bigg|{\epsilon=0} = -H^{-1}\nabla_\theta\mathcal{L}(z_m, \theta^*)
\end{equation}
where $H = \nabla^2_\theta J(\theta^*, S)$ is the Hessian of the loss. This allows us to linearly approximate the change in parameters when adding $z_m$ with weight $\epsilon = \frac{1}{n}$:
\begin{equation}
\theta^*(\epsilon) - \theta^* \approx I_{\theta^*}(z_m)\epsilon = -H^{-1}\nabla_\theta\mathcal{L}(z_m, \theta^*)\epsilon
\end{equation}


Since the direct parameter influence $\mathcal{I}_{\theta^*}$ can be difficult to interpret, it is common to instead compute the influence on a measurable quantity $g(\theta)$, such as the validation loss or model predictions for a query point $z_q$. By the chain rule:
\begin{align}
\mathcal{I}_f(z_m) &= \nabla_\theta g(\theta^*)^\top \mathcal{I}_{\theta^*}(z_m) \nonumber \\
        &= -\nabla_\theta g(\theta^*)^\top H^{-1}\nabla_\theta\mathcal{L}(z_m, \theta^*) \label{eq:inf}
\end{align}

Thus, one can efficiently analyze how an individual training examples affects both model parameters and downstream quantities of interest, without requiring explicit model retraining.

In this paper, we are interested in the case where $g = \mathcal{L}(z_m, \theta^*)$ i.e. the influence of a point on itself, also referred to sometimes as self-influence~\cite{feldmanzhang}. 

\subsection{Self Supervised Learning}
In SSL, we train a model using only input data without explicit labels. Let $x \in \mathcal{X}$ be an input, and $\tau: \mathcal{X} \rightarrow \mathcal{X} \times \mathcal{X}$ be a stochastic transformation function that generates a pair of views $(x_a, x_b) = \tau(x)$. The model consists of an encoder $f_\theta: \mathcal{X} \rightarrow \mathbb{R}^d$ that maps inputs to embeddings, where $\theta \in \mathbb{R}^D$ represents the model parameters.

Given a dataset $S = \{x_i\}_{i=1}^n$, the model is trained to maximize the similarity between embeddings of different views of the same input while minimizing similarity between embeddings of different inputs. 


The optimal parameters are found through empirical risk minimization:
\begin{equation}
\theta^* = \argmin_{\theta \in \mathbb{R}^D} \frac{1}{n}\sum_{i=1}^n \mathcal{L}(x_i, \theta)
\end{equation}

Our setup enables the analysis of influence functions in the self-supervised setting by treating each $x_i$ as a training example $z_i$, without requiring modification to the influence computation methods described above.

\subsection{Related Work}
Data Attribution (DA) techniques explain model predictions by analyzing the training data that shaped the model~\cite{hammoudeh23}. DA methods are broadly categorized into retraining-based and gradient-based approaches. Retraining-based methods, including leave-one-out~\cite{cook82, feldmanzhang}, Shapley value techniques~\cite{shapley:book1952, ghorbani19}, and Datamodels~\cite{ilyas22a}, assess data impact by retraining the model on subsets, which is computationally expensive and limits scalability. Gradient-based methods estimate data influence via parameter sensitivity. Notable examples are representer point methods~\cite{yeh18}, TracIn~\cite{pruthi20}, and influence functions~\cite{koh2017understanding}, the focus of this work. Extensions of influence functions explore group data effects~\cite{koh19group}, higher-order information~\cite{basu20second}, and normalized rankings~\cite{barshan20a}.

The fundamental idea of quantifying influence based on the impact of individual training data points on model predictions has been extensively studied in supervised learning~\cite{arpit2017closer, zhang2021understanding, carlini2019secret}. For instance, Feldman et al.~\cite{feldman2020does} study the effect of long-tailed distributions on generalization via influence functions. However, such studies have been limited in self-supervised learning (SSL). While most influence based DA analysis has been limited to the small neural networks, a recent study~\cite{grosse2023studying} extends influence based DA analysis to large-language models, with the main contribution being scalability. Despite plenty of literature on influence functions, they have not been studied for SSL settings. Our work focuses on the SSL setting for visual data and provides theoretical analysis and justifications for our results. 


\section{Influence-SSL}
\label{sec:inf}
Extending influence functions (Equation~\ref{eq:inf}), traditionally applied in supervised learning, to self-supervised learning (SSL) requires selecting a label-free objective that reveals representation properties in SSL. Specifically, we ask: What choice of $g$ in Equation~\ref{eq:inf} would help us in understanding the impact of a training point on representation? To this end, we propose a novel adaptation of influence functions for SSL, leveraging the invariance-distinctiveness trade-off inherent in SSL objectives. Our method estimates a sample's influence by evaluating how its exclusion affects the model’s ability to align augmented views.

We formalize this concept by introducing a refined definition of the influence score $\mathcal{I}$ for an unlabelled image $x_i$, utilizing a pre-trained SSL model $f_{\theta}$:
\begin{equation}\label{eq:inf-ssl}
\mathcal{I}(f, i) = -\nabla_\theta \mathcal{L}(f_\theta(x_i), f_\theta(\hat{x}_i))^\top H_\theta^{-1} \nabla_\theta \mathcal{L}(f_\theta(x_i), f_\theta(\hat{x}_i))
\end{equation}
In this formulation, $\mathcal{L}$ represents the cosine distance between an image $x_i$ and its augmented variant $\hat{x}_i$, expressed as $\mathcal{L}(t_i, t_j) = 1-\frac{t_i \cdot t_j}{||t_i|| ||t_j||}$ while $H_\theta$ represents the Hessian of the model's loss over the dataset with respect to it's parameters. The computation of $H_\theta^{-1}$ is efficiently implemented through LoGra~\cite{choe2024dataworthgptllmscale}.
 
The choice of cosine distance as our objective $\mathcal{L}$ is deliberate, given the range of loss functions $\tilde{\mathcal{L}}$ across SSL methods. Both theoretical and empirical justifications support this selection: methods like Barlow Twins~\cite{zbontar2021barlow}, and BYOL~\cite{byol} directly optimize cosine similarity, while theoretical studies~\cite{huang2021towards, wang2022chaos} have shown its implicit maximization in contrastive frameworks, and recent work~\cite{zhang2022mask} reveals similar alignment in Masked AutoEncoders through mask-induced positive pairs.

We note that our proposed definition quantifies how a training point affects its own representation consistency, making it technically a measure of SSL $\emph{self}$-influence. However, since our primary focus is on analyzing how individual training examples impact their own learned representations, we use the terms Influence-SSL and SSL self-influence interchangeably throughout this work. While our definition naturally extends to measuring influence on separate test points analogously to the supervised setting, we defer a comprehensive investigation of such cross-point influences to future work.

\subsection{Understanding Influence-SSL in a Simplified Setting}
To provide theoretical justification for our influence definition, we analyze a simplified setting with linear networks, which have been studied extensively in the deep learning theory literature~\cite{misiakiewicz2023lectureslinearizedneuralnetworks}, and small perturbation-based augmentations. This analysis reveals that our proposed influence measure naturally emerges from fundamental SSL principles, while highlighting key differences from traditional supervised influence functions.

\begin{theorem}[Influence-SSL Characterization]\label{thm:inf-ssl}
Consider a two-layer linear network $f(x) = v^T(Wx)$ with parameters $W \in \mathbb{R}^{k \times d}$ and $v \in \mathbb{R}^k$, and an augmentation function $x_{aug} = x + \varepsilon\delta(x)$ where $|\delta(x)| = 1$, and $\varepsilon \ll 1$. Under the squared Euclidean distance loss $L_{ssl}(W;x) = |Wx - Wx_{aug}|^2$, the influence of a training point $x$ with regularization parameter $\lambda \geq 0$ is given by:
\[I_{ssl}^\lambda(x) = -4\varepsilon^4|W\delta(x)|^2\frac{1}{\lambda + 2\varepsilon^2}\]
Moreover, as $\lambda \to 0$, the influence simplifies to:
\[I_{ssl}(x) = -2\varepsilon^2|W\delta(x)|^2\]
\end{theorem}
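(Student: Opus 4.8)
The plan is to compute the one-dimensional influence quantity in Equation~\eqref{eq:inf-ssl} directly for this toy model, exploiting the fact that with a fixed perturbation direction $\delta(x)$ the SSL loss becomes an explicit quadratic form in $W$, so both the gradient $\nabla_W L_{ssl}$ and the (regularized) Hessian $H_\theta$ can be written in closed form. First I would substitute $x_{aug} = x + \varepsilon\delta(x)$ into $L_{ssl}(W;x) = |Wx - Wx_{aug}|^2 = \varepsilon^2|W\delta(x)|^2$; the loss on a single point is thus a homogeneous quadratic in the entries of $W$ contracted against $\delta(x)\delta(x)^\top$. Treating $v$ as fixed (only $W$ is trained in this characterization) and flattening $W$, I would compute $\nabla_W L_{ssl} = 2\varepsilon^2 W\delta(x)\delta(x)^\top$, i.e. as a vector in $\mathbb{R}^{kd}$ it is $2\varepsilon^2\,(\delta(x)\otimes I_k)\,\mathrm{vec}(W)$ up to the usual Kronecker bookkeeping.

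Next I would identify the Hessian. Since $L_{ssl}$ is quadratic in $W$, its Hessian with respect to $\mathrm{vec}(W)$ is the constant matrix $2\varepsilon^2\,\delta(x)\delta(x)^\top \otimes I_k$; adding the $\ell_2$ regularizer with parameter $\lambda$ gives $H_\theta^\lambda = \lambda I + 2\varepsilon^2\,\delta(x)\delta(x)^\top\otimes I_k$. The key observation is that $\delta(x)\delta(x)^\top$ is rank one with unit nonzero eigenvalue (because $|\delta(x)| = 1$), so $H_\theta^\lambda$ has eigenvalue $\lambda + 2\varepsilon^2$ on the relevant subspace and $\lambda$ elsewhere; crucially, the gradient $\nabla_W L_{ssl}$ lies entirely in the $(\lambda + 2\varepsilon^2)$-eigenspace (it is a multiple of $W\delta(x)\delta(x)^\top$, hence supported on the range of $\delta(x)\delta(x)^\top\otimes I_k$). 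Therefore $H_\theta^{-1}\nabla_W L_{ssl} = (\lambda + 2\varepsilon^2)^{-1}\nabla_W L_{ssl}$, and the quadratic form collapses to
\[
I_{ssl}^\lambda(x) = -\frac{\|\nabla_W L_{ssl}\|^2}{\lambda + 2\varepsilon^2} = -\frac{\|2\varepsilon^2 W\delta(x)\delta(x)^\top\|_F^2}{\lambda+2\varepsilon^2} = -\frac{4\varepsilon^4|W\delta(x)|^2}{\lambda+2\varepsilon^2},
\]
using $\|W\delta(x)\delta(x)^\top\|_F^2 = |W\delta(x)|^2\,|\delta(x)|^2 = |W\delta(x)|^2$. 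Taking $\lambda\to 0$ yields $I_{ssl}(x) = -2\varepsilon^2|W\delta(x)|^2$.

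The main obstacle is making the Hessian step honest: the formula in Equation~\eqref{eq:inf-ssl} uses $H_\theta$, the Hessian of the loss \emph{over the dataset}, not of the single-point loss, so strictly the eigenspace/alignment argument above needs the simplifying assumption (implicit in a "simplified setting") that the relevant curvature is dominated by the same rank-one-per-point structure, or that the analysis is carried out per point as in the classical self-influence derivation. I would state this assumption explicitly and, if desired, note the alternative route of taking the regularizer to dominate so that $H_\theta^\lambda \approx \lambda I + (\text{per-point term})$ and the cross terms between distinct training points vanish to leading order in $\varepsilon$. A secondary, purely cosmetic point is reconciling the squared-Euclidean loss used here with the cosine distance $\mathcal{L}$ of Equation~\eqref{eq:inf-ssl}: for small $\varepsilon$ the two agree to leading order on normalized-or-near-normalized embeddings, and I would remark that this is why the toy computation is representative. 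Everything else is routine linear algebra (vectorization, Kronecker identities, rank-one spectral decomposition).
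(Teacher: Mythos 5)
Your proposal is correct and follows essentially the same route as the paper's proof: simplify the loss to $\varepsilon^2|W\delta(x)|^2$, compute the rank-one gradient $2\varepsilon^2 W\delta\delta^\top$ and Hessian $2\varepsilon^2\delta\delta^\top$, regularize with $\lambda I$, invert, and take $\lambda \to 0$. The only differences are cosmetic — the paper inverts $\lambda I + 2\varepsilon^2\delta\delta^\top$ via Sherman--Morrison and restricts to $k=1$ with a remark on the block-diagonal extension, whereas you note that the gradient lies in the $(\lambda+2\varepsilon^2)$-eigenspace (handling general $k$ via the Kronecker structure), and your caveat about the per-point versus dataset-level Hessian and the cosine-versus-Euclidean loss matches simplifications the paper makes explicitly in its setup and in its separate cosine-similarity proposition.
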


The proof proceeds by deriving the supervised gradients and Hessian, transitioning to the SSL setting, and finally computing the Influence-SSL  through careful analysis of the regularized Hessian inverse. The complete proof is provided in Appendix~\ref{suppl:theory}.

This result reveals that in a simplified SSL setting, the influence of a training point is proportional to $|W\delta(x)|^2$, where $\delta(x)$ captures input-dependent augmentation. While classical influence functions measure effects on prediction loss, our Influence-SSL quantifies representation alignment between augmented views, naturally aligning with SSL's core objective of learning invariant representations.

Influence magnitude reveals training dynamics:  low-influence points ($|W\delta(x)|^2$ small) indicate examples where representations are already invariant to augmentations, while high-influence points suggest cases where augmentations produce unexpectedly large changes in representation space. This allows identification of examples that significantly shape the model's invariance properties. 

While our main definition uses cosine loss, we prove in Appendix~\ref{suppl:theory} that the Euclidean loss in Theorem~\ref{thm:inf-ssl} is proportional to cosine similarity for small perturbations, providing additional theoretical grounding for our choice of using cosine distance in our influence definition in Equation~\ref{eq:inf-ssl}. 

A natural question arising from our analysis is why we demonstrate the relationship between Influence-SSL and representation invariance only in the linear setting. While influence functions were originally developed with strong theoretical guarantees for simple models~\cite{hampel1974influence}, recent work has shown that in neural networks, they can deviate significantly from leave-one-out retraining effects~\cite{basu-inf} and may track fundamentally different quantities (referred to as the $\emph{Proximal Bregman Response Function}$)~\cite{bae2022if}. Our focus on the linear setting therefore serves two purposes. First, it provides a theoretically rigorous foundation where we can obtain closed-form solutions that precisely characterize what Influence-SSL measures. Second, following a rich tradition of analyzing learning dynamics in linear networks~\cite{misiakiewicz2023lectureslinearizedneuralnetworks}, this simplified setting offers geometric insights that help explain empirical observations in deep networks (see Section \ref{sec:expts}). An interesting direction for future work would be to theoretically characterize exactly what Influence-SSL approximates in non-linear architectures, similar to recent analyses for supervised influence functions~\cite{bae2022if}.
\begin{figure}
    \centering
    \includegraphics[width=\linewidth]{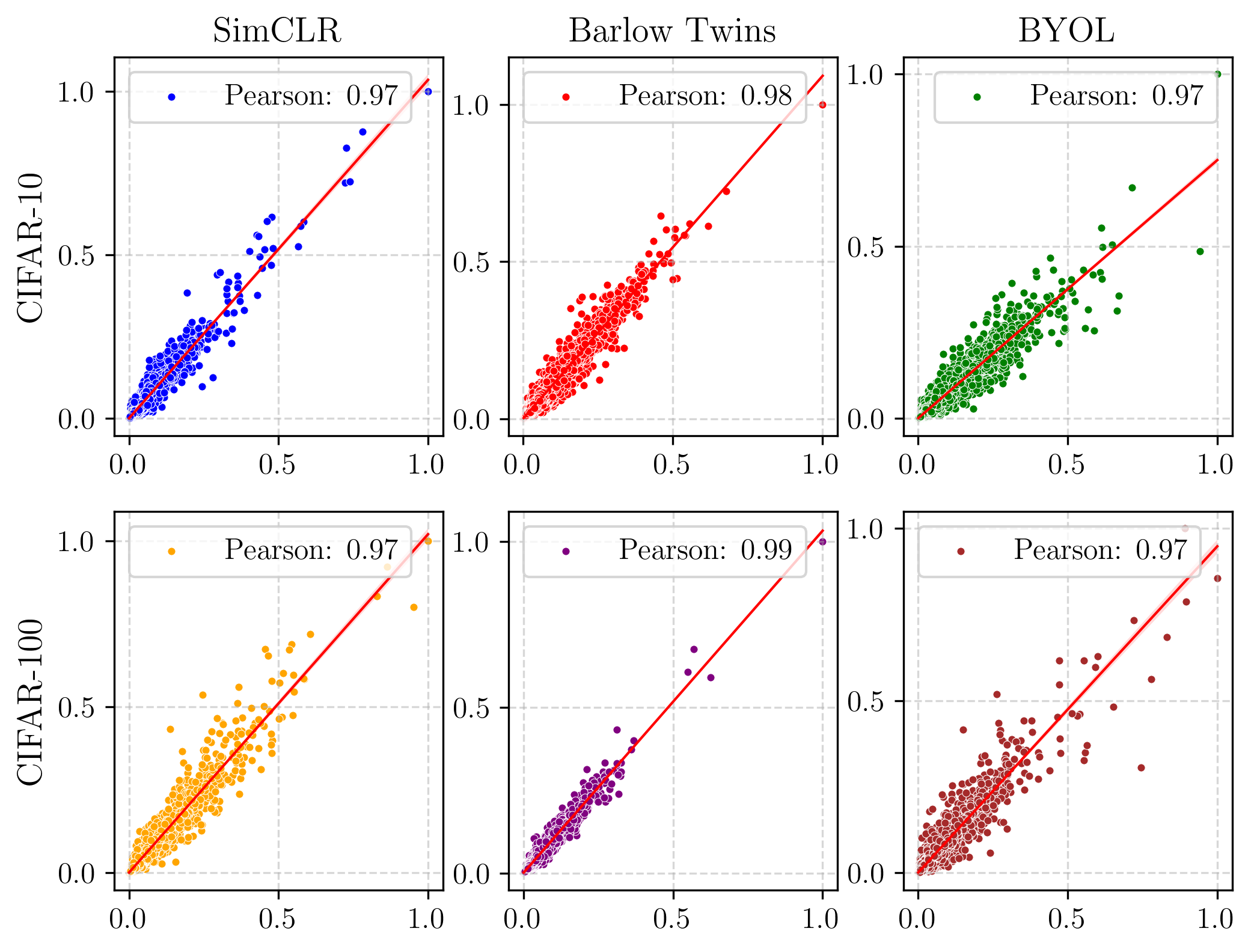}
    \vspace{-0.7cm}
    \caption{Correlation of Influence-SSL scores for two independent runs on SimCLR, Barlow Twins, and BYOL. Consistently high correlations are observed between influence scores computed using different initializations.}
    \label{fig:stable1}
\end{figure}

\begin{figure*}[t]
    \centering
    \includegraphics[width=\linewidth]{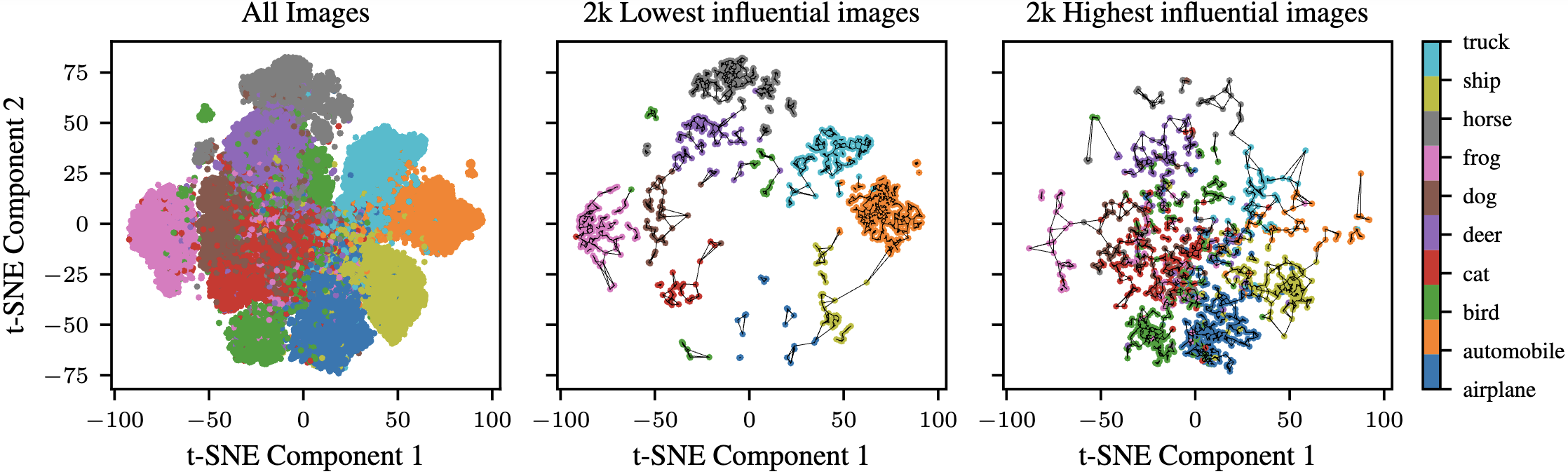}
    \caption{t-SNE projection of CIFAR-10 training images shows all examples (left), the 2,000 lowest (middle), and highest (right) influence scores. Low-influence images cluster tightly, while high-influence ones are dispersed.}
    \label{fig:tsne}
    \vspace{-0.5cm}
\end{figure*}

\subsection{Influence-SSL Properties in Linear Setting}

Building upon our previous analysis, we establish key properties of the Influence-SSL function that illuminate its geometric interpretation and behavior. We present intuitive explanations here, with formal theorem statements and proofs provided in Appendix~\ref{suppl:theory}.

\begin{proposition}[Structural Properties of Influence-SSL]
The Influence-SSL function exhibits the following properties:
\begin{enumerate}
\item \textbf{Geometric Decomposition}: $I_{ssl}(x) = -2\varepsilon^2\Tr(W\delta(x)\delta(x)^TW^T)$, separating into a perturbation scale factor and a representation sensitivity term
\item \textbf{Representation Invariance}: For any orthogonal matrix $Q$, $I_{ssl}(x;W) = I_{ssl}(x;QW)$
\item \textbf{Scaling Behavior}: $I_{ssl}(x;\alpha W) = \alpha^2 I_{ssl}(x;W)$ and $I_{ssl}(x;\varepsilon) = \varepsilon^2 I_{ssl}(x;1)$
\item \textbf{Stability}: $|I_{ssl}(x;W + E) - I_{ssl}(x;W)| \leq 4\varepsilon^2|\delta(x)|^2|W|_F|E|_F$ for perturbation $E$
\end{enumerate}
\end{proposition}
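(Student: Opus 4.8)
The plan is to obtain all four items as immediate consequences of the closed form $I_{ssl}(x) = -2\varepsilon^2|W\delta(x)|^2$ established in Theorem~\ref{thm:inf-ssl} (the $\lambda\to 0$ case), so that no analytic work beyond that theorem is needed: each property reduces to a one-line identity about the map $v \mapsto |v|^2$ and its behaviour under linear transformations. I would state this reduction at the outset and then dispatch the four parts in order.

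For the geometric decomposition I would write $|W\delta(x)|^2 = \delta(x)^T W^T W \delta(x)$, note that this scalar equals its own trace, and use cyclicity of the trace to get $\delta(x)^T W^T W \delta(x) = \Tr\big(W\delta(x)\delta(x)^T W^T\big)$, hence $I_{ssl}(x) = -2\varepsilon^2\,\Tr\big(W\delta(x)\delta(x)^T W^T\big)$, which exhibits the perturbation-scale factor $2\varepsilon^2$ multiplying the rank-one representation-sensitivity matrix $W\delta(x)\delta(x)^T W^T$ (the image of the augmentation direction in representation space). For representation invariance I would substitute $QW$ for $W$ with $Q^T Q = I$ and use $|QW\delta(x)|^2 = \delta(x)^T W^T Q^T Q W\delta(x) = |W\delta(x)|^2$, giving $I_{ssl}(x;QW) = I_{ssl}(x;W)$. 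For the scaling behaviour, degree-two homogeneity of $|\cdot|^2$ gives $|\alpha W\delta(x)|^2 = \alpha^2|W\delta(x)|^2$, hence $I_{ssl}(x;\alpha W) = \alpha^2 I_{ssl}(x;W)$; and since $\varepsilon$ enters the closed form only through the explicit prefactor, $I_{ssl}(x;\varepsilon) = \varepsilon^2\big(-2|W\delta(x)|^2\big) = \varepsilon^2 I_{ssl}(x;1)$.

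For stability I would expand $I_{ssl}(x;W+E) - I_{ssl}(x;W) = -2\varepsilon^2\big(\delta(x)^T W^T E\delta(x) + \delta(x)^T E^T W\delta(x) + |E\delta(x)|^2\big) = -4\varepsilon^2\langle W\delta(x),\,E\delta(x)\rangle - 2\varepsilon^2|E\delta(x)|^2$, so that the leading term is the Fréchet differential $-4\varepsilon^2\langle W\delta(x),E\delta(x)\rangle$. Bounding this by Cauchy--Schwarz together with $|Mv|\le |M|_F|v|$ (and $|\delta(x)|=1$, though I would keep the factor for the stated form) gives $|{-4\varepsilon^2\langle W\delta(x),E\delta(x)\rangle}| \le 4\varepsilon^2|W\delta(x)|\,|E\delta(x)| \le 4\varepsilon^2|\delta(x)|^2|W|_F|E|_F$, which is exactly the claimed inequality. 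The one place that needs care — and the main (mild) obstacle — is the quadratic remainder $-2\varepsilon^2|E\delta(x)|^2$: the clean bound $4\varepsilon^2|\delta(x)|^2|W|_F|E|_F$ holds as stated for the first-order (differential) estimate, and the exact difference carries an additional $2\varepsilon^2|\delta(x)|^2|E|_F^2$ term; I would either present the claim as this leading-order Lipschitz bound (the natural reading of a "stability" statement) or note that in the small-perturbation regime the $O(|E|_F^2)$ term is subdominant. Every other step is a direct substitution into the formula of Theorem~\ref{thm:inf-ssl}.
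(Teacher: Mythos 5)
Your proposal is correct and follows essentially the same route as the paper's appendix: starting from the closed form $I_{ssl}(x) = -2\varepsilon^2\|W\delta(x)\|^2$, trace cyclicity gives the decomposition, $Q^TQ=I$ gives orthogonal invariance, and quadratic homogeneity gives the scaling, exactly as in the paper's proofs. Your caveat on the stability item is also the right one — the paper derives that bound as a Lipschitz/mean-value estimate and its own corollary states it with an additional $O(\|E\|_F^2)$ remainder, which matches your observation that the clean inequality is the first-order estimate while the exact difference carries an extra $2\varepsilon^2\|\delta(x)\|^2\|E\|_F^2$ term.
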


The geometric decomposition factors Influence-SSL into two components: perturbation magnitude ($\varepsilon^2$) and model sensitivity ($\Tr(W\delta(x)\delta(x)^TW^T)$), enabling separate analysis of augmentation strength and representation learning effects. The representation invariance property shows that influence measures are preserved under rotational transformations, capturing intrinsic representational properties independent of parameterization. This invariance, combined with scaling properties and stability bounds, demonstrates alignment with SSL's core objective of learning robust representations while ensuring predictable behavior under parameter perturbations. We also show compositional properties of Influence-SSL in Appendix~\ref{suppl:theory}. 

\section{Experiments}

\label{sec:expts}
In this section, we present experiments empirically validating some properties that were discussed in Section~\ref{sec:inf} including the consistency of Influence-SSL across methods and independent runs, and its ability to detect atypical data points. We also provide qualitative insights into the learned representations and their relationship with pretraining data.
\vspace{-7mm}

\paragraph{Experimental Setup.} To ensure comprehensive evaluation across different self-supervised learning paradigms, we experiment with three representative approaches: SimCLR (contrastive)~\cite{Chen2020ASF}, BYOL (distillation-based)~\cite{Grill2020BootstrapYO}, and Barlow Twins (invariance-based)~\cite{Zbontar2021BarlowTS}. Due to the computational complexity of inverse Hessian-vector product (IHVP) calculations, we employ ResNet18 as our backbone architecture, conducting experiments on CIFAR10 and CIFAR100 datasets~\cite{cifar}. Our implementation of Influence-SSL utilizes an efficient low-rank approximation for inverse Hessian estimation. For the perturbation scheme described in Section 3, we employ Gaussian noise with $\mu = 0.05$ and $\sigma = 0.2$, which we found to consistently identify influential examples across all SSL methods (additional perturbation experiments are detailed in Appendix~\ref{suppl:exp}). Unless explicitly stated otherwise, all experiments follow this standard configuration. To calculate influence scores, we adapt the LogIX package~\cite{Choe2024WhatIY} for SSL.

\subsection{Stability of Influence-SSL}

A critical consideration in influence estimation is the reproducibility and stability of the computed scores across different training runs. While traditional supervised learning influence methods have shown varying degrees of stability, the complex nature of contrastive and non-contrastive SSL frameworks raises additional concerns about the reliability of influence measurements. 

Building upon the theoretical guarantees established in Proposition 1, which demonstrates the stability of our influence estimation under certain conditions, our empirical analysis confirms these theoretical insights by showing notable stability in influence scores across independent runs. As shown in Figure~\ref{fig:stable1}, we observe consistently high \textit{Pearson rank} correlation ($\rho >0.96$) between influence scores computed from different initializations for all three frameworks: SimCLR, BYOL, and Barlow Twins. This strong correlation holds across both CIFAR-10 and CIFAR-100 datasets, suggesting that the influence estimation procedure captures reproducible patterns of sample importance rather than artifacts of specific training trajectories.


\subsection{Characteristics of Influential Examples}
\begin{figure}[t]
    \centering
    \includegraphics[width=\linewidth]{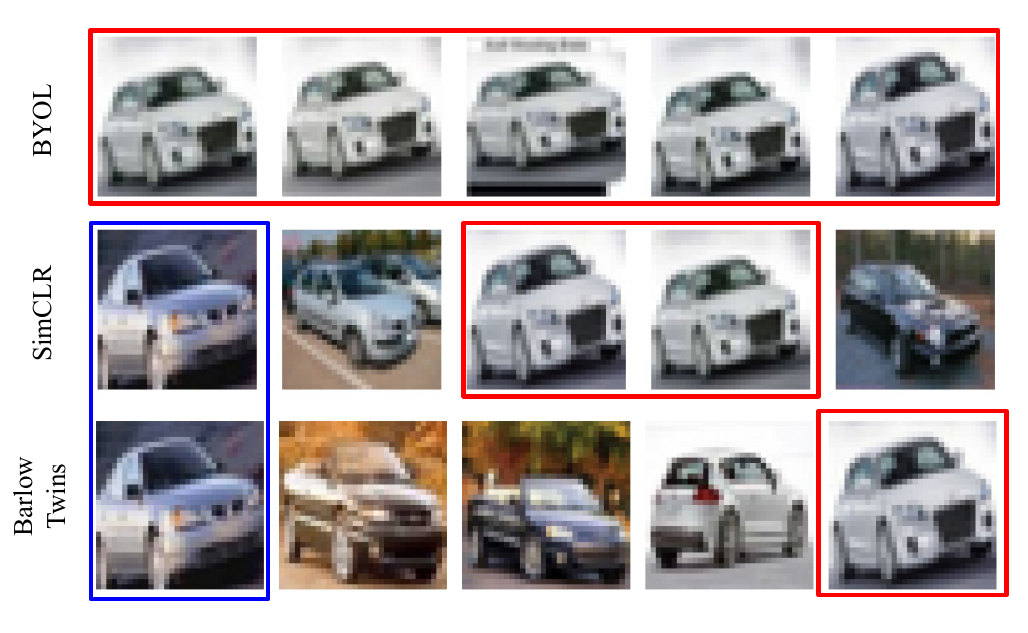}
    \vspace{-0.7cm}
    \caption{Five lowest influential images of the CIFAR-10 `automobile' class identified using BYOL, SimCLR, and Barlow Twins. Duplicate images are highlighted in red color.}
    \label{fig:duplicate}
    \vspace{-0.5cm}
\end{figure}

Our investigation into the characteristics of influential examples begins with a representation-level analysis. We visualize the learned representations through t-SNE as shown in Figure~\ref{fig:tsne}. We observe that the top-1000 influential examples form tightly intermingled clusters, while the lowest-1000 influential examples exhibit clear, well-separated clusters. This clustering behavior provides an initial insight into how Influence-SSL identifies examples based on their semantic relationships. The low-influential examples appear to possess clear, easily distinguishable semantic characteristics, making their class-level features more discriminative. In contrast, high-influential examples demonstrate semantic ambiguity, suggesting they play a more complex role in the self-supervised learning process.
\vspace{-4mm}
\paragraph{Can Influence Scores Identify Semantic Duplicates?}
Building upon representation-level insights from Figure~\ref{fig:tsne}, we discover an intriguing capability of our influence estimation method. Across different self-supervised learning frameworks, the influence scores consistently identify duplicate or near-duplicate images within classes, particularly evident in the automobile category of CIFAR-10. When examining the 5 lowest influential images for the automobile class on CIFAR-10 as shown in Figure~\ref{fig:duplicate}, we observed notable differences between the methods themselves. For BYOL, all 5 of the lowest influential images were duplicates of the same car, indicating that BYOL was particularly effective at identifying these near-identical images within the class. In contrast, SimCLR had 2 duplicate images in the top 5, while Barlow Twins only had 1 duplicate. Although SimCLR and Barlow Twins were less prone to selecting exact duplicates as the least 5-influential examples, further analysis revealed that more duplicate images emerged when the examination was extended beyond the top five least influential instances. An expanded version of Figure~\ref{fig:duplicate} with additional duplicates is in Appendix~\ref{suppl:exp}. 

\begin{figure}[t]
    \centering
    \includegraphics[width=\linewidth]{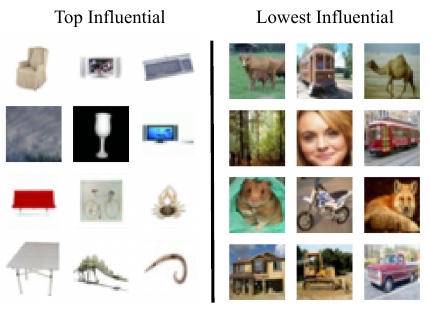}
    \vspace{-0.7cm}
    \caption{Visualization of the 12 highest and 12 lowest influential examples in CIFAR-100, showing that the highest influence images predominantly feature uniform backgrounds.}
    \label{fig:cifar100}
    \vspace{-0.6cm}
\end{figure}
\begin{figure*}[t]
    \centering
    \includegraphics[width=\linewidth]{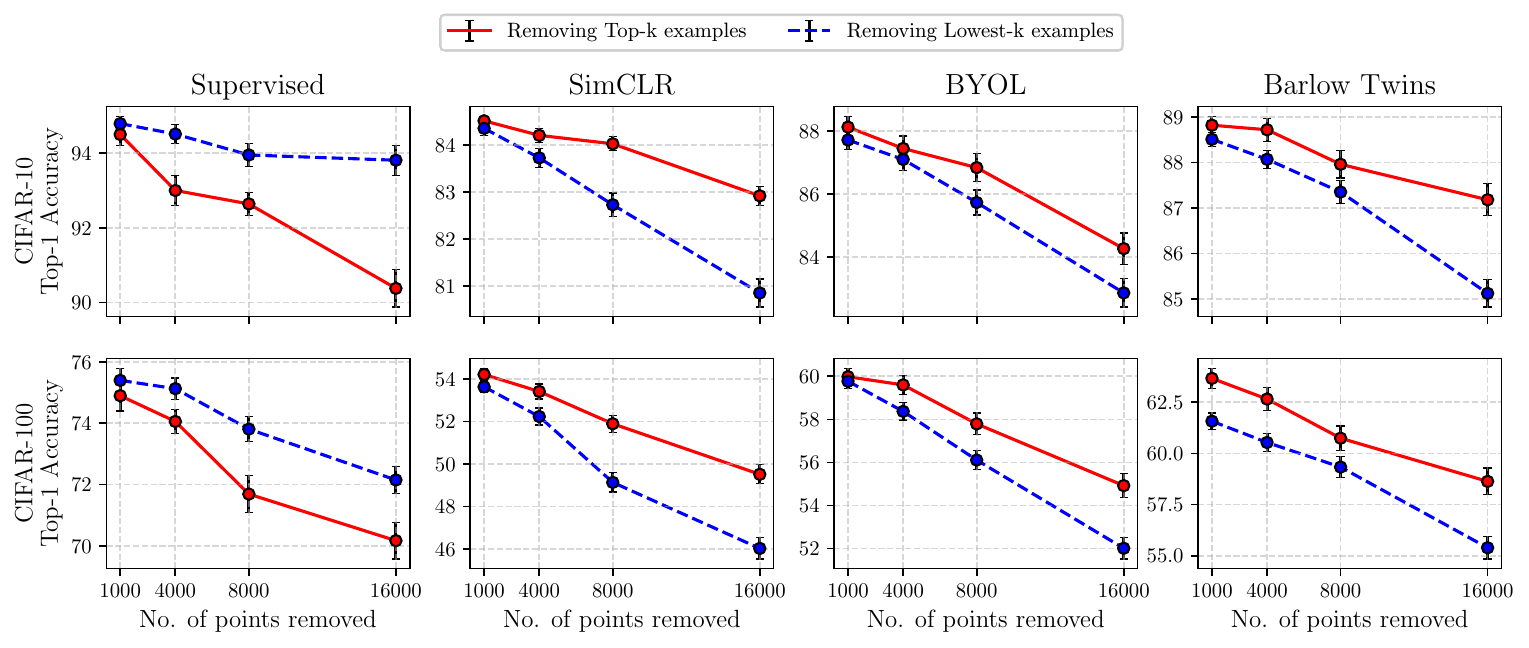}
    \vspace{-0.7cm}
    \caption{Accuracy (y-axis) vs. number of removed samples (x-axis) for CIFAR-10 (top row) and CIFAR-100 (bottom row). Samples were removed based on influence scores from pretrained SSL methods (SimCLR, BYOL, and Barlow Twins) and a supervised model. Unlike SSL, the supervised setting exhibits an opposite performance trend.}
    \label{fig:removal}
    \vspace{-0.5cm}
\end{figure*}

More importantly, these observations highlight an interesting and potentially useful property of Influence-SSL for identifying duplicates that was not explicitly modeled, but emerged naturally from the analysis. Understanding how different SSL methods differentially treat duplicate and outlier examples within classes could provide insights into their inner workings and lead to performance improvements or more robust model selection. Overall, this work demonstrates the value of influence estimation as a tool for probing the representations learned by self-supervised models.

\vspace{-3mm}
\paragraph{Visual patterns in influential examples.}
A particularly revealing pattern emerges when examining the visual characteristics of influential examples in CIFAR-100. Figure~\ref{fig:cifar100} shows that images with the highest influence scores predominantly feature uniform backgrounds (either white or black), deviating significantly from the natural image distribution. This finding provides critical insights into how self-supervised models learn and what they consider ``similar''. Unlike supervised learning, where class-level feedback guides the learning of discriminative features, SSL methods operate by bringing similar representations together based on different views of the same content. Ideally, these models should learn to focus on the semantic content (objects) while being invariant to background variations. However, our analysis reveals a different reality: the high concentration of uniform background images among influential examples, despite representing different object classes, suggests that the SSL model is inadvertently using background characteristics as a strong similarity signal.

This observation is particularly significant because it indicates a form of representational bias in SSL training. Images with similar backgrounds (especially uniform white or black) are being drawn together in the representation space, even when they contain semantically different objects. This background-driven clustering creates ambiguous decision boundaries in the representation space, as the model struggles to balance between background similarity and object-level semantic differences. Such a phenomenon wouldn't be easily detectable through conventional evaluation metrics, highlighting the value of influence analysis in understanding the biases learned during self-supervised training.

\subsection{Impact of Influential Examples on Model Performance: A Counterintuitive Discovery}

The conventional wisdom in machine learning suggests that removing highly influential training examples should lead to a significant degradation in model performance~\cite{feldmanzhang, feldman2020does, hammoudeh23}. This intuition, well-established in supervised learning literature, is based on the premise that influential examples play a crucial role in defining decision boundaries. However, our analysis in the self-supervised learning context reveals a striking and counterintuitive phenomenon.

As shown in Figure~\ref{fig:removal}, when we progressively remove the most influential examples from the training set, SSL methods (SimCLR, BYOL, and Barlow Twins) exhibit improved downstream performance, in direct contrast to the supervised learning baseline which shows expected degradation. This surprising result can be understood through the lens of our previous findings about the nature of high-influence examples in SSL. We posit that these examples, predominantly featuring uniform backgrounds, act as ``semantic bridges'' in the representation space – artificially connecting instances from different classes based on their background similarities rather than their semantic content.

The removal of these high-influence examples effectively eliminates these potentially misleading bridges, allowing the SSL models to focus more on genuine semantic relationships. This ``purification'' of the training set leads to more discriminative representations, as the model is no longer compelled to reconcile the conflicting signals between background similarity and semantic difference. In contrast, removing low-influence examples, which typically have more natural backgrounds and clearer semantic content, results in the expected performance degradation across all methods.

This finding challenges our fundamental assumptions about influence in representation learning and suggests that traditional notions of example importance may need to be reconsidered in the context of self-supervised learning. While high-influence examples in supervised learning often contribute positively to model performance by helping define class boundaries, their role in SSL appears to be more complex and potentially detrimental when they introduce unintended biases through background characteristics.

\subsection{Theoretical Intuition: Characterizing High-Influence Points via Augmentation Effects}

To provide theoretical intuition for our empirical findings that removing high-influence points improves downstream performance, we appeal to our simplified linear setting described earlier. 

For ease of exposition, we make the stochasticity in our augmentation framework explicit where each input x is transformed as $x_{aug} = x + \varepsilon\delta(x,\xi)$, where $\delta(x,\xi)$ is an input-dependent perturbation of unit norm, $\xi$ is drawn from a distribution $P(\xi)$ capturing randomness in the augmentation process, and $\varepsilon \ll 1$ controls the perturbation magnitude. Our goal is to understand when and why certain training points exhibit high influence under this framework.

We begin by examining the expected influence across all inputs:

\begin{definition}[Expected SSL Influence]
For a model parameterized by matrix $W$, the expected influence under augmentations is:
\begin{align*}
    I_{expected}(W) &= -2\varepsilon^2\mathbb{E}_{x \sim P(x), \xi \sim P(\xi)}[|W\delta(x,\xi)|^2] \\
                    &=  -2\varepsilon^2\Tr(W^TW\Sigma)
\end{align*}
where $\Sigma = \mathbb{E}_{x,\xi}[\delta(x,\xi)\delta(x,\xi)^T]$ captures the second moment of perturbations across all inputs.
\end{definition}

This expected influence serves as a reference point - points with higher magnitude influence than this expectation are those where augmentations have an unusually strong effect. The following proposition characterizes this relationship:

\begin{proposition}[Influence Deviation Characterization]
For a training point $x$ with influence $I_{ssl}(x) = -2\varepsilon^2|W\delta(x,\xi)|^2$, the deviation from expected influence is:
   \begin{align*}
       I_{ssl}(x) - &\mathbb{E}_{\xi \sim P(\xi)}[I_{ssl}(x)] \\
       &= -2\varepsilon^2\Tr(W^TW(\delta(x,\xi)\delta(x,\xi)^T - \Sigma_x)) 
   \end{align*}
where $\Sigma_x = \mathbb{E}_{\xi}[\delta(x,\xi)\delta(x,\xi)^T]$ represents the expected augmentation behavior for input $x$.
\end{proposition}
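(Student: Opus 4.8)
The plan is to reduce the statement to a single trace manipulation followed by linearity of expectation; no new machinery beyond the Geometric Decomposition already recorded in Proposition~1 is needed. First I would rewrite the per-sample influence $I_{ssl}(x) = -2\varepsilon^2|W\delta(x,\xi)|^2$ as a trace: expand the squared norm as $|W\delta(x,\xi)|^2 = \delta(x,\xi)^T W^T W \delta(x,\xi)$, observe that this is a scalar and hence equal to its own trace, and apply cyclic invariance of the trace to get $I_{ssl}(x) = -2\varepsilon^2\Tr\!\big(W^T W\,\delta(x,\xi)\delta(x,\xi)^T\big)$ (equivalently $-2\varepsilon^2\Tr(W\delta(x,\xi)\delta(x,\xi)^T W^T)$, matching the form used in Proposition~1).

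Next I would take the conditional expectation over $\xi \sim P(\xi)$ with $x$ held fixed. Since $W$ is deterministic and both $\Tr(\cdot)$ and $\mathbb{E}_\xi[\cdot]$ are linear operations on a fixed finite-dimensional matrix, they commute, so $\mathbb{E}_\xi[I_{ssl}(x)] = -2\varepsilon^2\Tr\!\big(W^T W\,\mathbb{E}_\xi[\delta(x,\xi)\delta(x,\xi)^T]\big) = -2\varepsilon^2\Tr(W^T W\,\Sigma_x)$ by the definition $\Sigma_x = \mathbb{E}_\xi[\delta(x,\xi)\delta(x,\xi)^T]$. The only hypothesis invoked is that this second moment exists, which is automatic since $|\delta(x,\xi)| = 1$ bounds every entry of the outer product. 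Subtracting the two expressions and pulling out the common factor $-2\varepsilon^2\Tr(W^T W\,\cdot\,)$ yields $I_{ssl}(x) - \mathbb{E}_\xi[I_{ssl}(x)] = -2\varepsilon^2\Tr\!\big(W^T W(\delta(x,\xi)\delta(x,\xi)^T - \Sigma_x)\big)$, which is the claimed identity.

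There is no genuine obstacle here; the computation is essentially two lines. The only points deserving a sentence of care are (i) justifying the interchange of expectation and trace, handled above, and (ii) the consistency check that this conditional quantity is the natural ``inner'' counterpart of $I_{expected}(W)$ from the preceding definition --- averaging the identity further over $x \sim P(x)$ and using the tower property together with $\mathbb{E}_x[\Sigma_x] = \Sigma$ recovers $I_{expected}(W)$ exactly. I would close by noting that the identity makes the interpretation precise: a point carries above-average influence magnitude precisely when its realized rank-one perturbation $\delta(x,\xi)\delta(x,\xi)^T$ overweights the directions amplified by $W^T W$ relative to its own mean $\Sigma_x$.
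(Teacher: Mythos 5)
Your proof is correct and follows essentially the same route as the paper's: rewrite $I_{ssl}(x)$ as $-2\varepsilon^2\Tr(W^TW\,\delta(x,\xi)\delta(x,\xi)^T)$ via cyclicity of the trace, then use linearity of expectation and trace to identify $\mathbb{E}_\xi[I_{ssl}(x)] = -2\varepsilon^2\Tr(W^TW\Sigma_x)$ and subtract. Your added remarks on existence of the second moment and consistency with $I_{expected}(W)$ are fine but not needed beyond what the paper does.
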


This characterization shows exactly when a training point has unusually high influence. The term $\delta(x,\xi)\delta(x,\xi)^T - \Sigma_x$ measures how much a specific augmentation of point $x$ differs from its typical augmentation behavior ($\Sigma_x$). When this difference interacts strongly with the model's transformation ($W^TW$), we get a point whose influence deviates significantly from expectation. In other words, high-influence points are those where augmentations produce unexpectedly large changes in the representation space. For this characterization to meaningfully identify atypical points, we require some regularity in how augmentations behave across the dataset (see technical conditions in Appendix~\ref{suppl:theory}.


Under these conditions, high-influence points correspond to examples where augmentations produce unexpectedly large deviations from expected behavior while aligning with the model's transformation. While our linear analysis cannot fully explain the downstream benefits of removing such points, it suggests these examples may interfere with learning consistent features due to their atypical augmentation behavior. Our empirical observations of improved task performance after removing high-influence points support this interpretation, though establishing rigorous theoretical connections between augmentation consistency and downstream performance remains an open challenge.

\subsection{Model Fairness}

\begin{figure}[t]
    \centering
    \includegraphics[width=\linewidth]{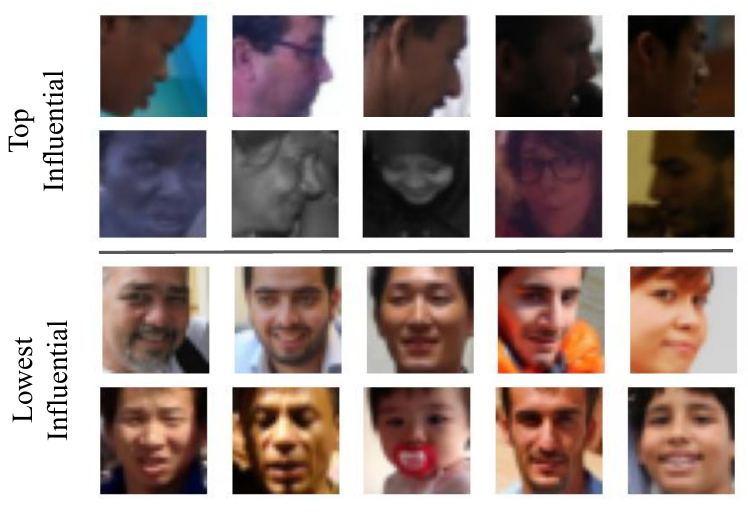}
    \vspace{-0.7cm}
    \caption{The top 10 and lowest 10 influential examples in the FairFace dataset were identified using BYOL, showing that the high influential examples are mostly challenging cases.}
    \label{fig:faces}
\end{figure}

\begin{figure}[t]
    \centering
    \includegraphics[width=\linewidth]{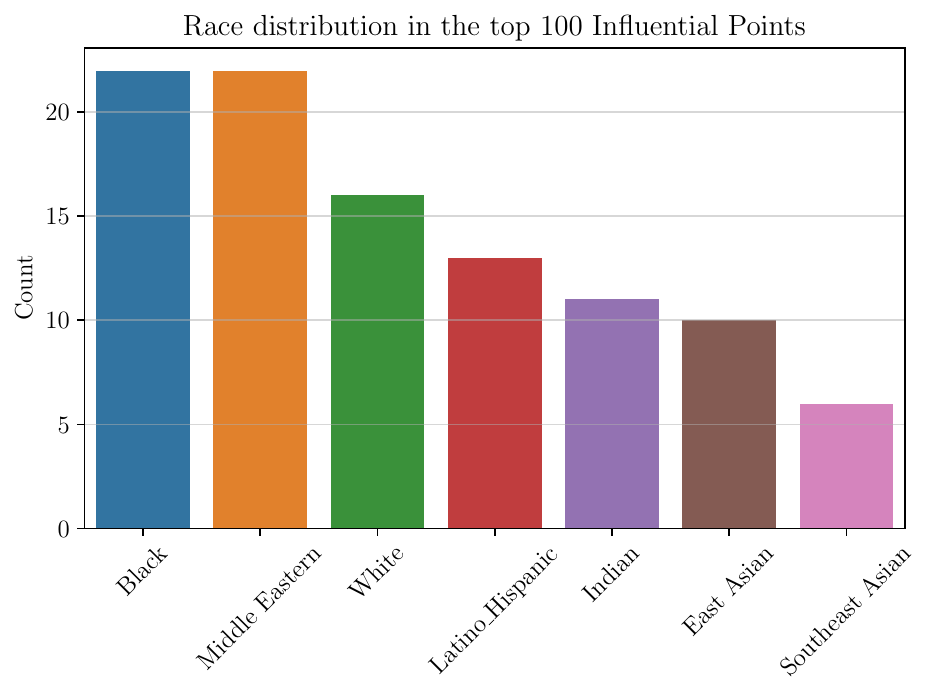}
    \vspace{-0.7cm}
    \caption{The histogram of the racial distribution among the top 100 influential examples in the FairFace dataset, revealing uneven representation.}
    \label{fig:face-bar}
\end{figure}

To analyze potential biases in self-supervised learning, we conducted experiments using the FairFace dataset~\cite{karkkainenfairface}, which was specifically designed to provide balanced racial representation across different demographic groups. We chose FairFace for its careful curation of facial images with balanced distributions across race, gender, and age groups, making it an ideal testbed for investigating potential biases in representation learning. We trained BYOL~\cite{Grill2020BootstrapYO} on this dataset for 200 epochs, using the 0.25-padding version of the dataset. Leveraging BYOL~\cite{Grill2020BootstrapYO} as our self-supervised learning framework, we examined the distribution of influential examples identified by our proposed Influence-SSL method.

Figure~\ref{fig:faces} shows the top and lowest 10 influential examples. We observed that highly influential examples predominantly consisted of challenging cases - images with significant pose variations (side-turned faces) and suboptimal lighting conditions. In contrast, low-influence examples consistently featured well-lit, front-facing portraits. More importantly, when examining the racial distribution among the top 100 influential examples, we found an uneven representation: Black and Middle Eastern faces were disproportionately represented among high-influence points, followed by White faces, while East and Southeast Asian faces were notably underrepresented (See Figure~\ref{fig:face-bar}). While our work does not directly address bias mitigation, we would like to show that Influence-SSL can be valuable for detecting and analyzing biases that may emerge during self-supervised training - biases that might remain hidden using conventional evaluation methods. This capability is particularly crucial as self-supervised learning continues to be widely adopted in various computer vision applications where fairness considerations are important.

\section{Discussion and Conclusion}
This work presents Influence-SSL, the first systematic approach to defining and measuring influence in self-supervised learning. Our method not only extends the theoretical foundations of influence estimation to the SSL paradigm but also reveals several surprising insights about how these models learn from training data. Our findings challenge a core established assumption about influential examples in machine learning. We demonstrate that high-influence examples in SSL often exhibit characteristics that can potentially impede optimal representation learning – a sharp contrast to supervised learning where influential examples typically contribute positively to model performance. 

We show that Influence-SSL can be effectively used for identifying semantic duplicates within datasets, reveals potential biases in learned representations, and provides insights into fairness considerations, as demonstrated in our FairFace experiments. These capabilities make it a valuable tool for dataset curation, model debugging, and ensuring fairness in self-supervised learning systems. While our current study focuses on CIFAR-10 and CIFAR-100 datasets, the principles and methodologies we've established can be extended to larger-scale datasets and diverse domains.  Additionally, our findings about the role of background features in SSL suggest potential directions for developing more robust self-supervised learning algorithms that better capture semantic relationships while being less susceptible to spurious correlations.

{
    \small
    \bibliographystyle{ieeenat_fullname}
    \bibliography{main}
}

\clearpage
\setcounter{page}{1}
\maketitlesupplementary
\appendix


The appendix provides supplementary material to enhance the reproducibility, depth, and theoretical understanding of our study. Readers can navigate these sections as described below, to explore specific topics in greater detail:
\begin{itemize}
    \item \textbf{Section~\ref{suppl:rep}} details the steps and resources to ensure the reproducibility of our results.
    \item  \textbf{Section~\ref{suppl:exp}} includes additional experiments with deeper insights of Influence-SSL across various settings: comparisons of distributions (B.1), ablation studies on perturbation choice (B.2) and strength (B.3), an extended analysis of duplicate instances in CIFAR-10 (B.4), more visualizations of influential examples across CIFAR-100 (B.5). 
    \item \textbf{Section~\ref{suppl:theory}} provides proofs and additional theoretical discussions, including the proof of Theorem 1 (C.1), an exploration of the properties of Influence-SSL (C.2), and additional details from Section 4.4 of the main paper (C.3). 
\end{itemize}

\section{Reproducibility}
\label{suppl:rep}
Throughout our experiments, we utilize 3 self-supervised models: SimCLR, BYOL and Barlow. We obtain pre-trained weights provided by the Solo-learn library\footnote{Solo-learn: \url{https://github.com/vturrisi/solo-learn}} for computing Influence-SSL scores. Specifically, all the pretrained models for CIFAR-10 and CIFAR-100 datasets are trained with configurations detailed in Table~\ref{tab:pretrain}. We have used a consistent set of augmentations and hyperparameters across all pre-training setups, as shown in Table \ref{tab:augmentations}. 

Since we are only interested in the influence of training data on the model behavior, we only consider training set from CIFAR-10, CIFAR-100 and FairFace datasets both for pre-training and computing Influence-SSL scores. Validation ses from CIFAR-10 and CIFAR-100 are only used for obtaining the best checkpoints for training the supervised models. The pre-training configuration for the high and low-influential removal experiment of Figure~\ref{fig:removal} is detailed in Table~\ref{tab:removal_config}. We also share the seeds used for computing the error bars of Figure~\ref{fig:removal} in Table~\ref{tab:removal_config} set using:   

\begin{tcolorbox}
\begin{verbatim}
import lightning.pytorch as pl

SD = 0 #Seed number
pl.seed_everything(
         SD,
         workers=True
)
\end{verbatim}
\end{tcolorbox}

\begin{table}[ht]
\centering
\caption{Data augmentation configuration for pre-training.}
\label{tab:augmentations}
\begin{tabular}{lc}
\toprule
\textbf{Augmentation}       & \textbf{Hyperparam}
             \\ \midrule
Brightness & 0.8\\
Contrast & 0.8 \\
Saturation & 0.8 \\
Hue & 0.2 \\
Color Jitter Probability    & 0.8 \\
Grayscale Probability       & 0.2 \\
Horizontal Flip Probability & 0.5 \\
Gaussian Blur Probability   & 0.2 \\
Solarization Probability    & 0.2 \\
Crop Size                   & 32 \\
Random Crop - Min Scale               & 0.08 \\
Random Crop - Max Scale               & 1.0 \\ \bottomrule
\end{tabular}
\end{table}

\begin{table*}[ht]
\centering
\caption{Pre-training configuration for models before computing influence scores.}
\label{tab:config_details}

\begin{tabular}{@{}lccc@{}}
\toprule
\textbf{Configuration}        & \textbf{SimCLR}                              & \textbf{BYOL}                              & \textbf{Barlow Twins}                      \\ \midrule
\textbf{Backbone}             & ResNet18                                     & ResNet18                                   & ResNet18                                   \\ 
\textbf{Dataset} & CIFAR-10, CIFAR-100 & CIFAR-10, CIFAR-100 & CIFAR-10, CIFAR-100 \\ 
\textbf{Projection Hidden}      &  2048&  4096 & 2048\\ 
\textbf{Projection Output}      &  256 & 256 & 2048         \\ 
\textbf{Max Epochs}           & 1000                                         & 1000                                       & 1000                                       \\ 
\textbf{Precision}            & Mixed-16                                     & Mixed-16                                         & Mixed-16                                         \\ 

\textbf{Batch Size}           & 256                                          & 256                                        & 256                                        \\ 
\textbf{Learning Rate (LR)}   & 0.4                                          & 1.0                                        & 0.3                                        \\ 
\textbf{Classifier LR}        & 0.1                                          & 0.1                                        & 0.3                                        \\ 
\textbf{Weight Decay}         & $1 \times 10^{-5}$                           & $1 \times 10^{-5}$                         & $1 \times 10^{-4}$                         \\ 
\textbf{Optimizer}            & SGD (LARS Enabled)                           & SGD (LARS Enabled)                         & SGD (LARS Enabled)                         \\ 
\textbf{LARS Eta}             & 0.02                                         & 0.02                                       & 0.02                                       \\ 
\textbf{Scheduler}            & Warmup Cosine                                & Warmup Cosine                              & Warmup Cosine                              \\ 
\textbf{Warmup Start LR}      & 0.003                                        & 0.003                                      & 0.003                                      \\ 
\textbf{Warmup Epochs}        & 10                                           & 10                                         & 10                                         \\ 
\textbf{Temperature}          & 0.2                                          & N/A                                        & N/A                                        \\ 
\textbf{Momentum} & 0.9 & 0.9 & 0.9 \\ 
\textbf{Pretrained Checkpoints} & \hyperlink{https://drive.google.com/drive/folders/1mcvWr8P2WNJZ7TVpdLHA_Q91q4VK3y8O?usp=sharing}{cifar10}, \hyperlink{https://drive.google.com/drive/folders/13pGPcOO9Y3rBoeRVWARgbMFEp8OXxZa0}{cifar100} & \hyperlink{https://drive.google.com/drive/folders/1KxeYAEE7Ev9kdFFhXWkPZhG-ya3_UwGP}{cifar10}, \hyperlink{https://drive.google.com/drive/folders/1hwsEdsfsUulD2tAwa4epKK9pkSuvFv6m}{cifar100} & \hyperlink{https://drive.google.com/drive/folders/1L5RAM3lCSViD2zEqLtC-GQKVw6mxtxJ_}{cifar10}, \hyperlink{https://drive.google.com/drive/folders/1hDLSApF3zSMAKco1Ck4DMjyNxhsIR2yq}{cifar100} \\ \bottomrule

\end{tabular}\label{tab:pretrain}
\end{table*}

\paragraph{Note on FairFace experiment:} Several commercial computer vision systems (Microsoft, IBM, Face++) have been criticized due to their asymmetric accuracy across sub-demographics in recent studies~\cite{buolamwini2018gender, raji2019actionable}. These studies have shown that the commercial face processing systems all perform better on some races and light faces. This can be caused by the biases
in their training data and biases picked up during learning. FairFace dataset~\cite{karkkainenfairface}  has been created towards tackling algorithmic fairness, and in our work, we aim to identifying data with potential for creating biases in self-supervised learning methods.

\begin{table*}[ht]
\centering
\caption{Pre-training configuration of models for the removal experiment in Figure 6.}
\label{tab:removal_config}
\begin{tabular}{lccc}
\toprule
\textbf{Configuration} & \textbf{SimCLR} & \textbf{BYOL} & \textbf{Barlow Twins} \\ \midrule
\textbf{Backbone}       & ResNet18        & ResNet18      & ResNet18             \\ 
\textbf{Dataset}        & CIFAR-10, CIFAR100        & CIFAR-10, CIFAR100      & CIFAR-10, CIFAR100             \\ 
\textbf{Projection Hidden} & 2048 & 2048 & 2048  \\ 
\textbf{Projection Output} & 256 &
 256 & 512 \\ 
\textbf{Loss Temperature}    & 0.2   &  0.2  & 0.1   \\ 
\textbf{Optimizer}      & LARS  & LARS  & LARS  \\ 
\textbf{LARS Eta}             & 0.02                                         & 0.02                                       & 0.02                                       \\ 
\textbf{Initial Learning Rate}      &  0.4 & 0.35 & 0.35  \\ 
\textbf{Weight Decay}      & $1\times 10^{-4}$ & $3\times10^{-5}$ & $3\times10^{-5}$  \\ 
\textbf{Batch Size}     & 512    & 512 & 512\\ 
\textbf{Scheduler}   & Warmup Cosine  & Warmup Cosine& Warmup Cosine  \\ 
\textbf{Warmup Epochs}        & 10                                           & 10                                         & 10                                         \\ 
\textbf{Max Epochs}         & 250 & 250 & 250 \\ 
\textbf{Precision}      & Mixed-16        & Mixed-16       & Mixed-16            \\ 
\textbf{Momentum} & 0.9 & 0.9 & 0.9 \\ 
\textbf{Seeds}      & 000, 042, 123        & 000, 042, 123       & 000, 042, 123           \\ 
\bottomrule
\end{tabular}
\end{table*}

\section{Additional Experiments}
\label{suppl:exp}
\vspace{0 mm}
\subsection{Influence-SSL: Distributional Comparison}
\begin{figure}[t]
    \centering
    \includegraphics[width=\linewidth]{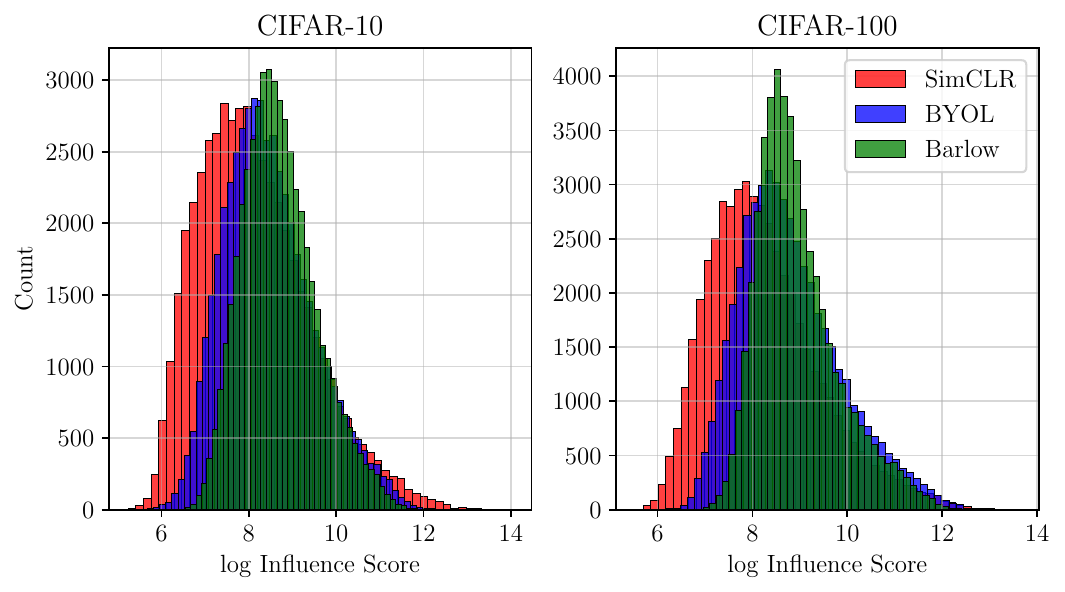}
    \caption{Distribution comparison of $\log$ Inluence-SSL scores between SSL methods like SimCLR, BYOL and Barlow Twins.}
    \label{fig:supl-hist}
\end{figure}
When comparing the distributions of influence scores across different self-supervised learning frameworks such as SimCLR, BYOL, and Barlow Twins on CIFAR-10 and CIFAR-100, we observe distinct patterns in their $\mu$ and $\sigma$ values. For CIFAR-10, Barlow Twins achieves the highest $\mu$ influence score (8.71) with the lowest $\sigma$ (0.85), followed by BYOL with a $\mu$ of 8.47 and a $\sigma$ of 1.03. SimCLR exhibits the lowest $\mu$ (8.13) and the highest variability (1.27). A similar trend is observed for CIFAR-100 but more apparent, where Barlow Twins again leads with the highest $\mu$ score (8.96) and lowest variability (0.86), followed by BYOL ($\mu$: 8.81, $\sigma$: 1.08) and SimCLR ($\mu$: 8.24, $\sigma$: 1.18).

Interestingly, these results align with the top-1 accuracy rankings: Barlow Twins > BYOL > SimCLR, suggesting a correlation between higher influence scores and better downstream performance. The lower standard deviation in Barlow Twins indicates more stable and consistent influence scores across samples, which may contribute to its superior performance. BYOL, with slightly higher variability, balances strong influence scores with moderate stability. In contrast, SimCLR shows both the lowest mean and highest variability, potentially reflecting its reliance on simpler contrastive objectives that may lead to less consistent influence across representations.

\subsection{Ablation: Influence-SSL Perturbation Choice}
\begin{table}
\centering
\caption{Pearson Ranked Correlation for Different Augmentations.}

\begin{tabular}{l c}
\toprule
\textbf{Perturbation ($\epsilon$)} & \textbf{Pearsonr} \\ 
\midrule 
Random Flip + Grayscale + Gauss Blur   & 0.7197 \\ 
Random Flip + Color Jitter + Grayscale & 0.7234 \\ 
Random Flip + Color Jitter + Gauss Blur & 0.7422 \\ 
Random Flip + Gaussian Blur               & 0.7447 \\ 

Random Cropping                               & 0.9509  \\
Gaussian Noise (original Influence-SSL) & \textbf{0.9745} \\
\bottomrule
\end{tabular}
\label{tab:abl-aug}
\end{table}

In the experiments presented in the main paper, we modeled the perturbation ($\epsilon$) in $\hat{x} = x + \epsilon$ as Gaussian noise. This choice was motivated by its simplicity for theoretical argument and the consistently high \textit{Pearson} rank correlation observed across a range of hyperparameters, datasets, and SSL frameworks. Gaussian noise serves as an intuitive and effective baseline, enabling us to explore the influence of perturbations on learned representations without introducing unnecessary complexity. However, the choice of perturbation type may significantly impact the influence scores of the self-supervised learning frameworks. In this section, we extend our analysis by exploring alternative augmentation strategies for generating perturbations. We aim to evaluate the stability of influence scores on different types of perturbations.

The results from evaluating different augmentation strategies as reported in Table~\ref{tab:abl-aug} reveal interesting insights into their effects on the Pearson rank correlation of influence scores. Among the augmentations tested, Gaussian Noise (GN) with a standard deviation of 0.2 achieves the highest correlation (0.974), confirming its stability and alignment with the baseline experiments in the main paper. This is closely followed by Random Cropping (RP) with a correlation of 0.951, suggesting that spatial perturbations can also preserve the relative order of influence scores effectively. In contrast, augmentations involving combinations of visual transformations such as Horizontal Flip (HF), Color Jittering (CJ), Gaussian Blur (GB), and Grayscale (GS) result in noticeably lower correlations, ranging from 0.719 to 0.744. These findings highlight the robustness of Gaussian Noise as a perturbation strategy, as well as the potential for spatial augmentations like random cropping to serve as viable alternatives. However, augmentations that heavily alter visual features (e.g., color and texture) tend to reduce correlation, likely due to their stronger impact on the input space.

\subsection{Ablation: Varying Influence-SSL Perturbation Strength}
Since Influence-SSL has the following form:
$$
\mathcal{I}(f, i) = -\nabla \mathcal{L}(f(x_i), f(\hat{x}_i))^\top H_\theta^{-1} \nabla \mathcal{L}(f(x_i), f(\hat{x}_i))
$$

where $\hat{x}_i = x_i + \mathcal{N}(\mu, \sigma)$. We want to discuss the effect of $\mu$ and $\sigma$ which serves as the hyper-parameters for the \textit{Gaussian} perturbation.
\begin{figure}
    \centering
    \includegraphics[width=\linewidth]{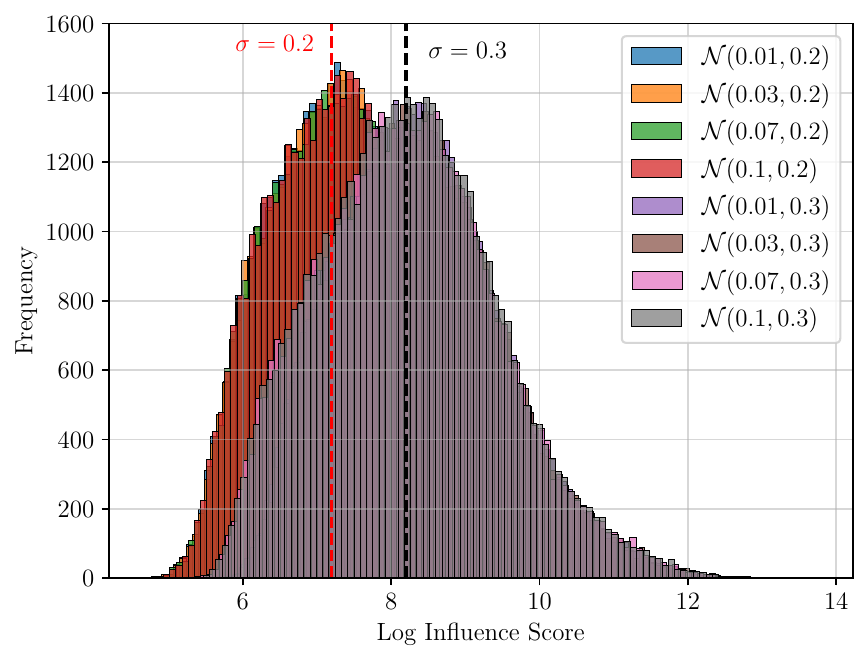}
    \caption{Distribution of influence scores with varying perturbation levels ($\mu$ and $\sigma$) of the \textit{Gaussian} noise.}
    \label{fig:abl-loss}
\end{figure}
\begin{figure}
    \centering
    \includegraphics[width=\linewidth]{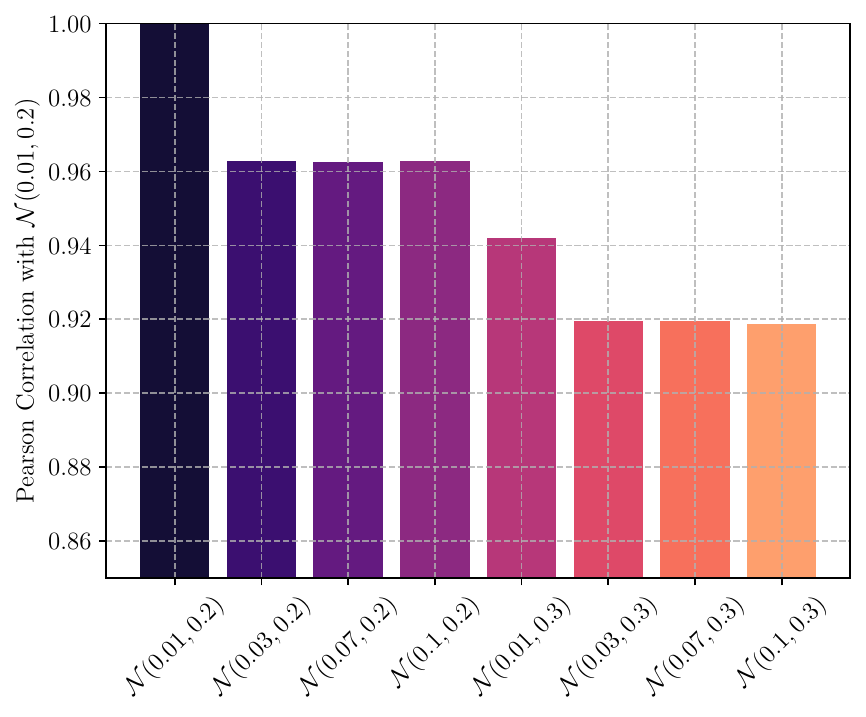}
    \caption{Pearson ranked correlation between influence scores from $\mathcal{N}(0.01, 0.2)$  with other perturbation levels.}
    \label{fig:abl-corr}
\end{figure}

We observe that when comparing small standard deviation values (0.2 and 0.3), the distributions of influence scores exhibit significant overlap, as illustrated in Figure~\ref{fig:abl-loss}. Specifically, when moving from a standard deviation of 0.2 to 0.3, the distribution shifts slightly to the right. This suggests that perturbations with larger deviations generally result in higher average influence scores. However, this shift does not necessarily lead to a substantial change in the relative ranking of the scores within the distribution.

To further analyze this, Figure~\ref{fig:abl-corr} presents a comparison of the \textit{Pearson} rank correlation between these scores and the baseline scores computed with a standard deviation of $\mathcal{N}(0.01, 0.2)$. Despite the observed shift in distributions, the scores remain highly correlated, with all correlations exceeding 0.9. Interestingly, higher perturbation deviations tend to slightly reduce the overall correlation, indicating a gradual impact on the rank ordering as the deviation increases. This demonstrates that while increased deviation affects the absolute scores, it does not disrupt the underlying relative order to a significant extent.

\subsection{Duplicate Instances in CIFAR-10}
\begin{figure}
    \centering
    \includegraphics[width=\linewidth]{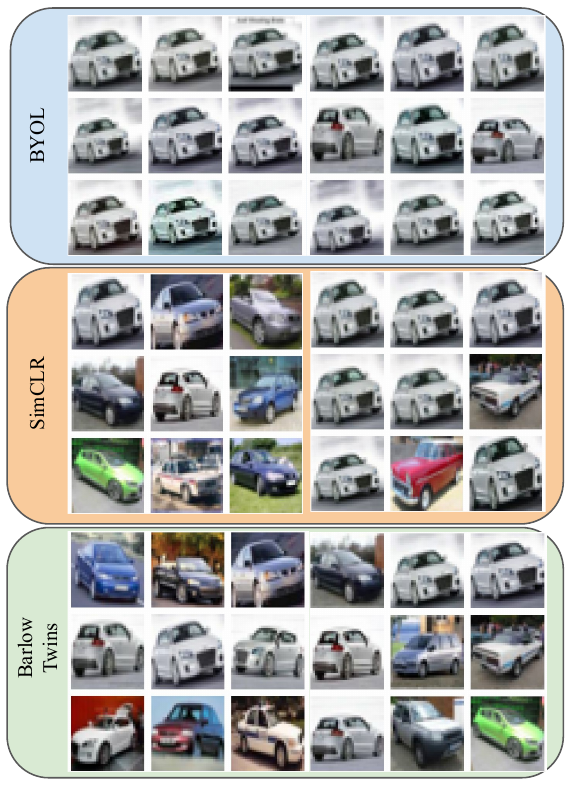}
    \caption{More examples of lowest-18 influential images in `automobile' class of CIFAR-10.}
    \label{fig:abl-dup}
\end{figure}
In our extended analysis of duplicates, we plot the 18 least influential images from the `automobile' class of CIFAR-10 to compare BYOL, SimCLR, and Barlow Twins. As we discussed in section~4.2, for BYOL, all the selected images are duplicates, representing just two distinct cars repeated multiple times. SimCLR exhibits slightly fewer duplicates than BYOL but still shows a high prevalence of repeated instances, while Barlow Twins has the least number of duplicates among the three methods. Despite these variations, most of the images across all methods are duplicates, suggesting that these frameworks assign low influence to repetitive instances, possibly due to their limited contribution to learning diverse representations. The higher concentration of duplicates in BYOL may reflect its reliance on pairwise consistency objectives, which could make it more sensitive to redundancy in the data. In contrast, SimCLR and Barlow Twins, particularly the latter, appear to mitigate the impact of duplicates more effectively, aligning with their design differences in handling data augmentation and feature alignment. This observation highlights the varying susceptibility of SSL frameworks to redundancy in the dataset.

\subsection{Visualization of Influential Examples}
We further analyze the qualitative differences between the most and least influential images for SimCLR (Figures~\ref{fig:top400-simclr} and \ref{fig:low400-simclr}) and supervised ResNet-18 (Figures~\ref{fig:top400-sup} and \ref{fig:low400-sup}) on the CIFAR-100 dataset by visualizing the top 400 and lowest 400 influential images for each method. In the case of SimCLR, we observe a distinct pattern among the top 400 influential images, many of which feature either predominantly white or black backgrounds. This suggests a deviation from the natural distribution of CIFAR-100, potentially indicating that SimCLR assigns high influence to outliers or instances that do not conform to the dataset’s typical visual characteristics. Conversely, the least influential images for SimCLR are notably more colorful and appear to represent easier examples, possibly due to their closer alignment with the underlying distribution and straightforward features.

For supervised ResNet-18, however, no such distinct trend is observed in the influential images. Since supervised learning involves labels, the definition of influence is tied to label-driven objectives, and highly influential images are often atypical from a label noise perspective rather than due to visual characteristics. This suggests that supervised models focus on resolving label inconsistencies or learning hard examples, whereas SimCLR prioritizes instances that deviate from the dataset’s image distribution.
\begin{figure*}[p]
    \centering
    \includegraphics[width=\linewidth]{}
    \caption{Top 400 influential images for SimCLR on CIFAR100.}
    \label{fig:top400-simclr}
\end{figure*}

\begin{figure*}[p]
    \centering
    \includegraphics[width=\linewidth]{}
    \caption{Lowest 400 influential images for SimCLR on CIFAR100.}
    \label{fig:low400-simclr}
\end{figure*}

\begin{figure*}[p]
    \centering
    \includegraphics[width=\linewidth]{}
    \caption{Top 400 influential images for supervised ResNet-18 on CIFAR100.}
    \label{fig:top400-sup}
\end{figure*}

\begin{figure*}[p]
    \centering
    \includegraphics[width=\linewidth]{}
    \caption{Lowest 400 influential images for supervised ResNet-18 on CIFAR100.}
    \label{fig:low400-sup}
\end{figure*}

\newpage\clearpage

\section{Theoretical Analysis} \label{suppl:theory}
In this section, we provide proofs and additional details of our theoretical results.

\subsection{Proof of Theorem 1}

We break this proof into several steps: first, we choose our setup to be simple enough such that we can get a closed-form solution for the influence function while still being insightful enough to give us some information of the underlying dynamics. Then, we compute the gradients and hessians relevant to our problem and explicitly compute the influence function according to our proposed definition.

In Proposition~\ref{prop:cosine}, we show a connection between the choice of Euclidean distance chosen in our theoretical setup in Theorem 1 and the choice of using cosine distance in our definition of influence in our proposed definition.

\textbf{Step 1: Setup}

We begin with defining our setting.

\begin{definition}[Two-Layer Linear Network]
Consider a network with parameters $W \in \mathbb{R}^{k \times d}$ and $v \in \mathbb{R}^k$, defining:
\[f(x) = v^T(Wx)\]
with supervised loss:
\[L_{sup}(W,v;x,y) = \frac{1}{2}(y - v^TWx)^2\]
\end{definition}

\textbf{Step 2: Computing Gradients and Hessian}

\begin{lemma}[Gradients]
The gradients of $L_{sup}$ with respect to $W$ and $v$ are:
\begin{align*}
\nabla_v L_{sup} &= -(y - v^TWx)Wx \\
\nabla_W L_{sup} &= -(y - v^TWx)vx^T
\end{align*}
\end{lemma}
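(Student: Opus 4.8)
The plan is a direct application of the chain rule, treating the scalar residual $r := y - v^T W x$ as the intermediate quantity. Since $L_{sup}(W,v;x,y) = \tfrac12 r^2$, for any parameter block $\theta \in \{v, W\}$ we have $\nabla_\theta L_{sup} = r\,\nabla_\theta r = -\,r\,\nabla_\theta\!\left(v^T W x\right)$, so the whole computation reduces to differentiating the bilinear form $v^T W x$ with respect to each of its two arguments while holding $x$ (and the other argument) fixed.

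For the gradient in $v$, I would fix $W$ and $x$ and observe that $v \mapsto v^T W x = \langle v, Wx\rangle$ is linear in $v$ with constant coefficient vector $Wx \in \mathbb{R}^k$; hence $\nabla_v\!\left(v^T W x\right) = Wx$, which gives $\nabla_v L_{sup} = -(y - v^T W x)\,Wx$.

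For the gradient in $W$, the only genuine bookkeeping step is to express the scalar $v^T W x$ as a Frobenius inner product with $W$: writing $v^T W x = \sum_{i,j} v_i W_{ij} x_j = \langle v x^T,\, W\rangle_F$, one reads off $\nabla_W\!\left(v^T W x\right) = v x^T \in \mathbb{R}^{k\times d}$, and therefore $\nabla_W L_{sup} = -(y - v^T W x)\, v x^T$. Substituting both expressions back into the chain-rule identity above yields the two claimed formulas.

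There is no real obstacle here — the statement is a routine differentiation exercise, and it is included only to fix notation for the subsequent Hessian computation and the transition to the SSL loss. The one point that warrants a little care is the matrix derivative $\nabla_W(v^T W x)$, where one must identify the gradient as the rank-one matrix $v x^T$ (and not its transpose $x v^T$); rewriting the bilinear form as a Frobenius inner product $\langle v x^T, W\rangle_F$ removes any ambiguity. As an optional sanity check, one may verify the formulas coordinate-wise via $\partial L_{sup}/\partial v_i$ and $\partial L_{sup}/\partial W_{ij}$.
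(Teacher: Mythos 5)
Your proposal is correct and follows essentially the same route as the paper: apply the chain rule to $L_{sup} = \tfrac12(y - v^TWx)^2$ and differentiate the bilinear form $v^TWx$ with respect to $v$ and $W$ separately, yielding $Wx$ and $vx^T$ respectively. The Frobenius inner product justification for $\nabla_W(v^TWx) = vx^T$ is a slightly more explicit bookkeeping step than the paper gives, but the argument is the same.
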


\begin{proof}
For $\nabla_v L_{sup}$:
\begin{align*}
\nabla_v L_{sup} &= \frac{\partial}{\partial v}\left[\frac{1}{2}(y - v^TWx)^2\right] \\
&= -(y - v^TWx)\frac{\partial}{\partial v}(v^TWx) \\
&= -(y - v^TWx)Wx
\end{align*}

For $\nabla_W L_{sup}$:
\begin{align*}
\nabla_W L_{sup} &= \frac{\partial}{\partial W}\left[\frac{1}{2}(y - v^TWx)^2\right] \\
&= -(y - v^TWx)\frac{\partial}{\partial W}(v^TWx) \\
&= -(y - v^TWx)vx^T
\end{align*}
\end{proof}

\begin{lemma}[Hessian]
The Hessian blocks are:
\begin{align*}
H_{vv} &= Wx(Wx)^T \\
H_{WW} &= vv^T \otimes xx^T \\
H_{Wv} &= (Wx)vx^T - (y - v^TWx)x^T
\end{align*}
\end{lemma}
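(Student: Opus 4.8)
The plan is to obtain the three Hessian blocks by differentiating, one more time, the gradient expressions already established in the Gradients Lemma, applying the product rule while keeping careful track of which factors depend on which variable. Write $r := y - v^{T}Wx$ for the scalar residual, so that $\nabla_{v} L_{sup} = -r\,Wx$ and $\nabla_{W} L_{sup} = -r\,vx^{T}$. Beyond the product rule, the only ingredients needed are the first-order identities $\nabla_{v} r = -Wx$ and, in coordinates, $\partial r/\partial W_{ij} = -v_{i}x_{j}$, together with $\partial (Wx)_{k}/\partial W_{ij} = \delta_{ik} x_{j}$; all three are immediate from the definition of $r$.

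For $H_{vv} = \nabla_{v}\nabla_{v} L_{sup}$, differentiate $-r\,Wx$ in $v$: the vector $Wx$ is independent of $v$, so only $r$ contributes, and $\nabla_{v}(-r) = Wx$ gives $H_{vv} = (Wx)(Wx)^{T}$. For $H_{WW}$, differentiate $-r\,vx^{T}$ in $W$: the matrix $vx^{T}$ is constant in $W$, so again only $r$ contributes, and in coordinates $(H_{WW})_{(ij),(kl)} = (\partial(-r)/\partial W_{kl})\,(vx^{T})_{ij} = v_{i}x_{j}\,v_{k}x_{l} = (vv^{T})_{ik}(xx^{T})_{jl}$, which is exactly the Kronecker product $vv^{T}\otimes xx^{T}$ under the standard index ordering. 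For the cross block $H_{Wv}$, differentiate $\nabla_{v}L_{sup} = -r\,Wx$ with respect to $W$: here \emph{both} factors depend on $W$, so the product rule produces two terms. The term coming from differentiating $-r$ gives $v_{i}x_{j}(Wx)_{k}$, i.e.\ $(Wx)vx^{T}$; the term coming from differentiating $Wx$ gives $-r\,\delta_{ik}x_{j}$, i.e.\ $-(y-v^{T}Wx)x^{T}$ with the implicit $\delta_{ik}$ factor. Together these match the stated formula.

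I expect no genuine obstacle in this lemma; the only point requiring care is notational bookkeeping for the second-order objects — treating $H_{WW}$ as a $4$-tensor (equivalently, the Hessian over the vectorization of $W$) and recognizing the resulting coordinate expression as a Kronecker product, and likewise reading the mixed block $H_{Wv}$ as a $3$-tensor. Once the index conventions are fixed, each block follows from a single application of the product rule.
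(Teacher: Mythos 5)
Your proof is correct and follows essentially the same route as the paper: a second application of the product rule to the gradient expressions, with the residual $r = y - v^{T}Wx$ carrying the only $v$- or $W$-dependence in the scalar factor. The only cosmetic differences are that you obtain the mixed block by differentiating $\nabla_{v}L_{sup}$ in $W$ rather than $\nabla_{W}L_{sup}$ in $v$ (identical by equality of mixed partials), and you make explicit the tensor/Kronecker index bookkeeping that the paper leaves implicit.
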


\begin{proof}
For $H_{vv}$:
\begin{align*}
H_{vv} &= \frac{\partial}{\partial v}[-(y - v^TWx)Wx] \\
&= \frac{\partial}{\partial v}[(-y + v^TWx)Wx] \\
&= Wx(Wx)^T
\end{align*}

For $H_{WW}$:
\begin{align*}
H_{WW} &= \frac{\partial}{\partial W}[-(y - v^TWx)vx^T] \\
&= vv^T \otimes xx^T
\end{align*}

For $H_{Wv}$:
\begin{align*}
H_{Wv} &= \frac{\partial}{\partial v}[-(y - v^TWx)vx^T] \\
&= -\frac{\partial}{\partial v}[(y - v^TWx)vx^T] \\
&= -((-Wx)vx^T + (y - v^TWx)x^T) \\
&= (Wx)vx^T - (y - v^TWx)x^T
\end{align*}
\end{proof}

\textbf{Step 3: SSL setup}

\begin{definition}[SSL Setting]
Consider input perturbation:
\[x_{aug} = x + \varepsilon\delta\]
where $\delta^Tx = 0$, $|\delta| = 1$, and $\varepsilon \ll 1$.
We define the SSL loss using squared Euclidean distance between representations:
\[L_{ssl}(W;x) = |Wx - Wx_{aug}|^2\]
\end{definition}

Throughout this section and proof, to ease the notational burden, we suppress the dependence of $\delta$ on $x$ i.e. $\delta = \delta(x)$.

\begin{lemma}[SSL Loss Simplification]
The SSL loss simplifies to:
\[L_{ssl}(W;x) = \varepsilon^2|W\delta|^2\]
\end{lemma}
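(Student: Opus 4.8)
The plan is to directly substitute the augmentation definition $x_{aug} = x + \varepsilon\delta$ into the SSL loss $L_{ssl}(W;x) = |Wx - Wx_{aug}|^2$ and simplify. First I would compute the difference of representations: $Wx - Wx_{aug} = Wx - W(x + \varepsilon\delta) = -\varepsilon W\delta$. This is immediate from linearity of the map $W$.

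Next I would take the squared norm of this expression: $|Wx - Wx_{aug}|^2 = |-\varepsilon W\delta|^2 = \varepsilon^2|W\delta|^2$, using the fact that $\varepsilon$ is a scalar so it factors out of the norm as $\varepsilon^2$ (and $\varepsilon > 0$ since $\varepsilon \ll 1$, so $|{-\varepsilon}| = \varepsilon$). This gives the claimed identity $L_{ssl}(W;x) = \varepsilon^2|W\delta|^2$.

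It is worth noting that the orthogonality condition $\delta^T x = 0$ is not actually needed for this particular simplification — the cancellation of the $Wx$ terms happens regardless. That condition will presumably matter in later steps (e.g.\ when relating the Euclidean loss to cosine distance, or when computing the Hessian and influence). So the only thing to be careful about here is simply bookkeeping the sign and the scalar factor.

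There is essentially no obstacle in this step; it is a one-line computation. The \emph{entire} content is the observation that $Wx - Wx_{aug}$ telescopes to $-\varepsilon W\delta$ by linearity, after which the norm computation is routine. I would present it as a two-line displayed calculation inside the \texttt{proof} environment.
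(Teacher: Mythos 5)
Your proposal is correct and matches the paper's proof exactly: both substitute $x_{aug} = x + \varepsilon\delta$, use linearity of $W$ to reduce the difference to $-\varepsilon W\delta$, and pull out the scalar to get $\varepsilon^2|W\delta|^2$. Your side remark that the orthogonality condition $\delta^T x = 0$ is not needed for this step is also accurate.
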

\begin{proof}
\begin{align*}
L_{ssl}(W;x) &= |Wx - Wx_{aug}|^2 \\
&= |Wx - W(x + \varepsilon\delta)|^2 \\
&= |W(x - (x + \varepsilon\delta))|^2 \\
&= |-\varepsilon W\delta|^2 \\
&= \varepsilon^2|W\delta|^2
\end{align*}
\end{proof}

\begin{proposition}[Connection to Cosine Similarity] \label{prop:cosine}
For small perturbations, the cosine similarity loss:
\[L_{cos}(W;x) = 1 - \frac{(Wx)^T(Wx_{aug})}{|Wx| \cdot |Wx_{aug}|}\]
is proportional to the Euclidean loss $L_{ssl}(W;x)$.
\end{proposition}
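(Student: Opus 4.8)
The plan is to reduce the whole statement to a one-dimensional Taylor expansion in $\varepsilon$. First I would set $u = Wx$ and $w = W\delta$, so that the augmented representation is $Wx_{aug} = u + \varepsilon w$ and, by the SSL Loss Simplification lemma, $L_{ssl}(W;x) = \varepsilon^2|w|^2$. Writing the cosine term as $\frac{u^\top(u+\varepsilon w)}{|u|\,|u+\varepsilon w|}$ and introducing the scalars $a = u^\top w / |u|^2$ and $b = |w|^2/|u|^2$, this ratio becomes $(1+\varepsilon a)\bigl(1 + 2\varepsilon a + \varepsilon^2 b\bigr)^{-1/2}$, a purely scalar function of $\varepsilon$.

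Next I would expand to second order using $(1+t)^{-1/2} = 1 - \tfrac12 t + \tfrac38 t^2 + O(t^3)$ with $t = 2\varepsilon a + \varepsilon^2 b$. The key arithmetic point is that the $O(\varepsilon)$ contributions cancel: the factor $(1+\varepsilon a)$ supplies $+\varepsilon a$ while the square root supplies $-\varepsilon a$, so $L_{cos}(W;x) = 1 - (1+\varepsilon a)(1+2\varepsilon a + \varepsilon^2 b)^{-1/2} = \tfrac12\varepsilon^2 (b - a^2) + O(\varepsilon^3)$. I would then rewrite $b - a^2 = |u|^{-2}\bigl(|w|^2 - (u^\top w)^2/|u|^2\bigr) = |u|^{-2}\,|w_\perp|^2$, where $w_\perp = w - (u^\top w)u/|u|^2$ is the component of $W\delta$ orthogonal to $Wx$, using the Pythagorean identity $|w|^2 = |w_\parallel|^2 + |w_\perp|^2$.

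Finally I would compare the two losses directly: $L_{cos}(W;x) = \dfrac{|w_\perp|^2}{2\,|Wx|^2\,|W\delta|^2}\, L_{ssl}(W;x) + O(\varepsilon^3)$, where the prefactor is independent of $\varepsilon$, depending only on $W$, $x$, and the augmentation direction. In the regime of Theorem~\ref{thm:inf-ssl} with $\delta^\top x = 0$, $|\delta|=1$, and $W$ acting as a near-isometry on the relevant subspace so that $W\delta \perp Wx$, this collapses to $L_{cos}(W;x) \approx \tfrac{1}{2|Wx|^2}L_{ssl}(W;x)$, giving exact leading-order proportionality and hence justifying the use of cosine distance in Equation~\ref{eq:inf-ssl}.

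The main obstacle is conceptual rather than computational and should be flagged explicitly in the write-up: since $W\delta$ need not be exactly orthogonal to $Wx$, the word ``proportional'' must be understood as ``equal up to an $\varepsilon$-independent (though data-dependent) scalar plus an $O(\varepsilon^3)$ remainder''; only under the additional orthogonality condition is the constant a clean $\tfrac{1}{2|Wx|^2}$. The computational care required is minor and confined to tracking the $\varepsilon^2$ term of the square-root expansion and verifying the first-order cancellation; after that, the identification of $b - a^2$ with a squared projection length is immediate.
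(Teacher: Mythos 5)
Your expansion is correct, and it is in fact sharper than the paper's own argument. The paper's proof is a two-line geometric shortcut: it invokes $1-\cos\theta \approx \tfrac12\theta^2 \approx \tfrac12 |a-b|^2/|a|^2$ with $a = Wx$, $b = Wx_{aug}$, and immediately concludes $L_{cos} \approx \varepsilon^2|W\delta|^2/(2|Wx|^2)$. That chord-versus-angle identity silently assumes $|a| \approx |b|$, i.e.\ that the perturbation does not change the representation norm to first order, which is equivalent to $(Wx)^\top(W\delta) = 0$ --- a condition \emph{not} implied by the input-space assumption $\delta^\top x = 0$. Your full second-order Taylor expansion makes this visible: after the first-order cancellation you obtain $L_{cos} = \tfrac12\varepsilon^2(b-a^2) + O(\varepsilon^3) = \tfrac{\varepsilon^2}{2|Wx|^2}\bigl|(W\delta)_\perp\bigr|^2 + O(\varepsilon^3)$, so the true leading coefficient involves only the component of $W\delta$ orthogonal to $Wx$, and the paper's clean constant $1/(2|Wx|^2)$ is recovered exactly when $W\delta \perp Wx$ (your algebra for the cancellation and for $b-a^2 = |w_\perp|^2/|u|^2$ checks out). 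What each route buys: the paper's version is shorter and suffices for the intended qualitative message (cosine distance and Euclidean distance agree at order $\varepsilon^2$ up to an $\varepsilon$-independent normalization), while yours is rigorous at leading order, exposes the hidden orthogonality assumption, and correctly reframes ``proportional'' as ``equal up to a data-dependent, $\varepsilon$-independent factor plus $O(\varepsilon^3)$.'' The only caveat to flag is that your closing appeal to ``$W$ acting as a near-isometry'' is an additional hypothesis beyond what Theorem~\ref{thm:inf-ssl} assumes; without it the proportionality constant $|(W\delta)_\perp|^2/(2|Wx|^2|W\delta|^2)$ genuinely depends on the point, which is a limitation of the proposition's informal phrasing rather than of your proof.
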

\begin{proof}
For vectors $a,b$ with small angular separation $\theta$:
\begin{align*}
1 - \cos\theta &\approx \frac{1}{2}\theta^2 \
&\approx \frac{1}{2}\frac{|a-b|^2}{|a|^2}
\end{align*}
Applying this to our case with $a=Wx$ and $b=Wx_{aug}$:
\[L_{cos}(W;x) \approx \frac{1}{2}\frac{|Wx-Wx_{aug}|^2}{|Wx|^2} = \frac{\varepsilon^2|W\delta|^2}{2|Wx|^2}\]
Therefore, $L_{cos}$ is proportional to $L_{ssl}$ when normalized by $|Wx|^2$.
\end{proof}

\textbf{Step 4: Computing SSL Influence}
Let's compute the influence function for our simplified Euclidean loss $L_{ssl}(W;x) = \varepsilon^2|W\delta|^2$
\begin{lemma}[SSL Gradient and Hessian]
For the loss $L_{ssl}(W;x) = \varepsilon^2|W\delta|^2$:
\begin{enumerate}
\item The gradient is:
\[\nabla_W L_{ssl} = 2\varepsilon^2W\delta\delta^T\]
\item The Hessian is:
\[H_{ssl} = 2\varepsilon^2\delta\delta^T\]
\end{enumerate}
\end{lemma}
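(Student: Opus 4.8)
The plan is to treat $L_{ssl}(W;x) = \varepsilon^2|W\delta|^2$ as a quadratic function of the matrix $W$ and read off its gradient and Hessian from a second-order Taylor expansion in the matrix argument. First I would rewrite the loss in the bilinear form $L_{ssl}(W;x) = \varepsilon^2\,\delta^T W^T W\delta = \varepsilon^2\,\Tr(W\delta\delta^T W^T)$, which makes the quadratic dependence on $W$ fully explicit and (since the expression is exactly quadratic) guarantees the Hessian will be constant in $W$.

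Next I would perturb $W \mapsto W + tE$ for an arbitrary direction $E \in \mathbb{R}^{k\times d}$ and scalar $t$, and expand
\[
L_{ssl}(W+tE;x) = \varepsilon^2\big(|W\delta|^2 + 2t\,\langle W\delta, E\delta\rangle + t^2\,|E\delta|^2\big).
\]
The coefficient of $t$ yields the gradient: writing $\langle W\delta, E\delta\rangle = \langle W\delta\delta^T, E\rangle_F$ gives $\nabla_W L_{ssl} = 2\varepsilon^2\, W\delta\delta^T$, which is statement (1). The coefficient of $t^2$ is the quadratic form associated with the Hessian: $\frac{1}{2}\langle E, H_{ssl}E\rangle_F = \varepsilon^2|E\delta|^2 = \varepsilon^2\,\delta^T E^T E\delta$. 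Since $\langle E, 2\varepsilon^2 E\delta\delta^T\rangle_F = 2\varepsilon^2\,\delta^T E^T E\delta$, the Hessian operator is $E \mapsto 2\varepsilon^2 E\delta\delta^T$, i.e. scaled right-multiplication by $\delta\delta^T$; identifying this operator with the compact $d\times d$ matrix $2\varepsilon^2\delta\delta^T$ (the convention used in the remainder of the proof) gives statement (2).

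I do not expect a genuine obstacle here: this is a routine matrix-calculus computation made trivial by the exact quadratic structure. The one point I would state carefully is the identification of $H_{ssl}$ — formally an operator on $k\times d$ matrices (equivalently $2\varepsilon^2\delta\delta^T$ tensored with an identity block in vectorized form) — with the rank-one matrix $2\varepsilon^2\delta\delta^T$. I would flag that this matrix is singular, so the inverse Hessian in the influence formula only makes sense after the $\lambda$-regularization of Theorem~\ref{thm:inf-ssl}; keeping the identification consistent between this lemma and the subsequent computation of $H_\lambda^{-1}$ and $I_{ssl}^\lambda(x)$ is what needs attention, not the differentiation itself.
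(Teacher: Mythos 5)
Your proposal is correct and takes essentially the same route as the paper: both rewrite the loss as $\varepsilon^2\Tr(W\delta\delta^TW^T)$ and differentiate the exact quadratic form, your second-order expansion in a direction $E$ being just a more explicit version of the paper's direct matrix-calculus step. Your careful identification of the Hessian operator $E \mapsto 2\varepsilon^2 E\delta\delta^T$ with the $d\times d$ matrix $2\varepsilon^2\delta\delta^T$, together with the singularity/regularization caveat, is precisely the convention the paper adopts via its $k=1$ reduction (block-diagonal structure for $k>1$) and the subsequent Sherman--Morrison computation of $(H_{ssl}^\lambda)^{-1}$.
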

\begin{proof}
For the gradient:
\begin{align*}
L_{ssl} &= \varepsilon^2\text{tr}(W\delta\delta^TW^T) \\
\nabla_W L_{ssl} &= \varepsilon^2\nabla_W\text{tr}(W\delta\delta^TW^T) \\
&= 2\varepsilon^2W\delta\delta^T
\end{align*}
For the Hessian:
\begin{align*}
H_{ssl} &= \nabla_W(2\varepsilon^2W\delta\delta^T) \\
&= 2\varepsilon^2\delta\delta^T
\end{align*}
\end{proof}

Throughout the following derivations, for notational simplicity, we focus on the case $k=1$, i.e. $W \in \mathbb{R}^{1 \times d}$. 

In this scenario, $W$ reduces to a $1 \times d$ row vector (or effectively a $d$-dimensional vector), and all Hessians become $d \times d$ (for $W$) or $(d+1) \times (d+1)$ (for $(W,v)$) matrices, allowing for straightforward application of scalar and vector calculus rules.

For $k>1$, the parameter $W$ is a $k \times d$ matrix. In this case, when vectorizing $W$, it has $kd$ parameters. The Hessian with respect to $W$ alone is then a $(kd) \times (kd)$ matrix. Similarly, when considering $(W,v)$ together, the Hessian is $(kd + k)$-dimensional if $v \in \mathbb{R}^k$. 

The previously derived Hessian structures for $H_{vv}$, $H_{WW}$, and $H_{Wv}$ become block matrices, each block reflecting the corresponding dimensions. In particular, the Hessian for the SSL case becomes block diagonal in the $k$ dimension, with repeated $d \times d$ blocks along the diagonal. Our use of $\delta\delta^T$ and the resulting rank-1 updates carry over, but must be understood as operating within each $d$-dimensional block, repeated $k$ times. The Sherman-Morrison formula and subsequent inverses can be applied independently to each $d \times d$ block, yielding a similar final structure.

Since the main insights do not depend on $k>1$, and to keep the exposition clear, we continue with $k=1$. The generalization to $k>1$ requires handling a block-diagonal structure but does not change the qualitative results or final closed form.

Now, we return to our proof. Starting with our SSL loss gradient and singular Hessian:

$\nabla_W L_{ssl} = 2\varepsilon^2W\delta\delta^T$
$H_{ssl} = 2\varepsilon^2\delta\delta^T$

Since $H_{ssl}$ is singular (rank 1), we add regularization:
\[H_{ssl}^\lambda = 2\varepsilon^2\delta\delta^T + \lambda I \text{ where } \lambda > 0\]

Using Sherman-Morrison formula: For matrix $A$ and vectors $u,v$:
\[(A + uv^T)^{-1} = A^{-1} - \frac{A^{-1}uv^TA^{-1}}{1 + v^TA^{-1}u}\]

In our case: $A = \lambda I$ and $uv^T = 2\varepsilon^2\delta\delta^T$

Therefore:
\begin{align*}
(H_{ssl}^\lambda)^{-1} &= (\lambda I + 2\varepsilon^2\delta\delta^T)^{-1} \\
&= \frac{1}{\lambda}I - \frac{(1/\lambda)2\varepsilon^2\delta\delta^T(1/\lambda)}{1 + 2\varepsilon^2\delta^T(1/\lambda)\delta} \\
&= \frac{1}{\lambda}I - \frac{2\varepsilon^2}{\lambda^2}\frac{\delta\delta^T}{1 + 2\varepsilon^2/\lambda}
\end{align*}

Now for the influence function:
\begin{align*}
I_{ssl}^\lambda(x) &= -\nabla_W L_{ssl}^T (H_{ssl}^\lambda)^{-1} \nabla_W L_{ssl} \\
&= -(2\varepsilon^2W\delta\delta^T)^T(\frac{1}{\lambda}I - \frac{2\varepsilon^2}{\lambda^2}\frac{\delta\delta^T}{1 + 2\varepsilon^2/\lambda})(2\varepsilon^2W\delta\delta^T)
\end{align*}

Let us compute this term by term. First multiply rightmost term:
\begin{align*}
&(\frac{1}{\lambda}I - \frac{2\varepsilon^2}{\lambda^2}\frac{\delta\delta^T}{1 + 2\varepsilon^2/\lambda})(2\varepsilon^2W\delta\delta^T) \\
&= \frac{2\varepsilon^2}{\lambda}W\delta\delta^T - \frac{4\varepsilon^4}{\lambda^2}\frac{\delta(\delta^TW\delta)\delta^T}{1 + 2\varepsilon^2/\lambda}
\end{align*}

Now multiply with leftmost $(2\varepsilon^2\delta\delta^TW^T)$:
\begin{align*}
I_{ssl}^\lambda(x) &= -4\varepsilon^4\delta(\delta^TW^TW\delta)\delta^T\cdot\frac{1}{\lambda} + \frac{8\varepsilon^6}{\lambda^2}\frac{\delta(\delta^TW^TW\delta)\delta^T}{1 + 2\varepsilon^2/\lambda} \\
&= -4\varepsilon^4\|W\delta\|^2[\frac{1}{\lambda} - \frac{2\varepsilon^2}{\lambda^2}\frac{1}{1 + 2\varepsilon^2/\lambda}]
\end{align*}

where we use $\|W\delta\|^2 = \delta^TW^TW\delta$ and $\delta^T\delta = 1$.

The term in brackets simplifies as:
\begin{align*}
\frac{1}{\lambda} - \frac{2\varepsilon^2}{\lambda^2}\frac{1}{1 + 2\varepsilon^2/\lambda} 
&= \frac{1}{\lambda} - \frac{2\varepsilon^2}{\lambda^2}\frac{\lambda}{\lambda + 2\varepsilon^2} \\
&= \frac{1}{\lambda} - \frac{2\varepsilon^2}{\lambda(\lambda + 2\varepsilon^2)} \\
&= \frac{\lambda + 2\varepsilon^2 - 2\varepsilon^2}{\lambda(\lambda + 2\varepsilon^2)} \\
&= \frac{1}{\lambda + 2\varepsilon^2}
\end{align*}

Therefore:
\[I_{ssl}^\lambda(x) = -4\varepsilon^4\|W\delta\|^2\frac{1}{\lambda + 2\varepsilon^2}\]

Taking $\lambda \to 0$:
\[\lim_{\lambda \to 0} I_{ssl}^\lambda(x) = -4\varepsilon^4\|W\delta\|^2\frac{1}{2\varepsilon^2} = -2\varepsilon^2\|W\delta\|^2\]

\subsection{Proof of Influence-SSL Properties}

We can now establish results about the structure and behavior of these influence functions as described in Proposition 1. We split the proposition into several parts, eac corresponding to a property of the influence functions. 

\begin{claim}[Decomposition of SSL Influence, Part (A) of Prop. 1]
The SSL influence function admits a canonical decomposition:
\[I_{ssl}(x) = -2\varepsilon^2\|W\delta\|^2 = -2\varepsilon^2\text{tr}(W\delta\delta^TW^T)\]
which separates into:
\begin{enumerate}
    \item A scale factor $\varepsilon^2$ depending only on perturbation magnitude
    \item A geometric term $\text{tr}(W\delta\delta^TW^T)$ measuring representation sensitivity
\end{enumerate}
\end{claim}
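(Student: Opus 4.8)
The plan is to obtain this decomposition as an essentially immediate corollary of Theorem~1, so no new optimization or inversion argument is needed. First I would invoke the $\lambda \to 0$ limit already computed in the proof of Theorem~1, namely $I_{ssl}(x) = -2\varepsilon^2\|W\delta\|^2$ with $\delta = \delta(x)$ of unit norm. The entire remaining task is then (i) to re-express the squared norm $\|W\delta\|^2$ as a trace of a rank-one-type matrix, and (ii) to make precise in what sense the resulting expression "separates" into a perturbation-scale factor and a geometric factor.

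For step (i) I would expand $\|W\delta\|^2 = (W\delta)^T(W\delta) = \delta^T W^T W \delta$. Since this is a scalar it equals its own trace, so applying the cyclic invariance of the trace gives $\delta^T W^T W \delta = \text{tr}(\delta^T W^T W \delta) = \text{tr}(W\delta\delta^T W^T)$, which yields the claimed identity $I_{ssl}(x) = -2\varepsilon^2\,\text{tr}(W\delta\delta^T W^T)$. I would also note the consistency with the $k>1$ discussion in the proof of Theorem~1: there $W\delta\delta^T W^T$ is a $k\times k$ PSD matrix whose trace is exactly $\|W\delta\|_F^2$, so the same two-line manipulation applies, and the block-diagonal structure of the regularized Hessian noted earlier does not alter the conclusion.

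For step (ii) I would define $g(W,\delta) := \text{tr}(W\delta\delta^T W^T) = \|W\delta\|^2 \ge 0$ and write $I_{ssl}(x) = -2\varepsilon^2\, g(W,\delta(x))$, then observe that the prefactor $2\varepsilon^2$ depends only on the perturbation magnitude $\varepsilon$ (not on $W$ or the data), whereas $g(W,\delta(x))$ depends only on the network map $W$ and the unit-norm augmentation direction $\delta(x)$ and is invariant to rescaling $\varepsilon$; moreover $g$ is homogeneous of degree $2$ in $W$. This is precisely the "scale factor vs.\ representation-sensitivity" separation asserted in the claim (and foreshadows parts (C) of Proposition~1). The main obstacle here is essentially nil: the only care required is applying trace cyclicity to conformable matrices and stating the separation clause precisely enough that it is not a vacuous rephrasing, so I would keep the written proof to a few lines.
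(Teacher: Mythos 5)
Your proposal is correct and follows essentially the same route as the paper: both take the closed form $I_{ssl}(x) = -2\varepsilon^2\|W\delta\|^2$ from Theorem~1 and rewrite the squared norm as $\mathrm{tr}(W\delta\delta^T W^T)$ via cyclic invariance of the trace, with the scale/sensitivity separation then read off directly. Your added remarks on the $k>1$ block structure and degree-2 homogeneity in $W$ are fine but not needed.
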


\begin{proof}
Starting from $I_{ssl}(x) = -2\varepsilon^2\|W\delta\|^2$:
\begin{align*}
\|W\delta\|^2 &= (W\delta)^T(W\delta) \\
&= \text{tr}((W\delta)(W\delta)^T) \\
&= \text{tr}(W\delta\delta^TW^T)
\end{align*}
where we used the cyclic property of trace. The decomposition follows directly.
\end{proof}

This decomposition leads to a fundamental invariance property:

\begin{claim}[Orthogonal Invariance, , Part (B) of Prop. 1]
For any orthogonal matrix $Q \in \mathbb{R}^{k \times k}$, the SSL influence function is invariant under:
\[W \mapsto QW\]
That is, $I_{ssl}(x;W) = I_{ssl}(x;QW)$ for all inputs $x$.
\end{claim}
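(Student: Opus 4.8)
The plan is to reduce everything to the closed form supplied by Theorem~1 together with the canonical decomposition of the previous claim. That decomposition writes $I_{ssl}(x;W) = -2\varepsilon^2\|W\delta\|^2 = -2\varepsilon^2\,\mathrm{tr}(W\delta\delta^TW^T)$, exhibiting the influence as a function of $W$ \emph{only} through the scalar $\|W\delta\|^2$. Hence the entire content of the claim is the statement that $\|QW\delta\|^2 = \|W\delta\|^2$ whenever $Q$ is orthogonal. I would prove this in one line: $\|QW\delta\|^2 = (W\delta)^TQ^TQ(W\delta) = (W\delta)^T(W\delta) = \|W\delta\|^2$, using $Q^TQ = I_k$; equivalently, via the cyclic property of trace, $\mathrm{tr}(QW\delta\delta^TW^TQ^T) = \mathrm{tr}(W\delta\delta^TW^TQ^TQ) = \mathrm{tr}(W\delta\delta^TW^T)$. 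Substituting back gives $I_{ssl}(x;QW) = -2\varepsilon^2\|QW\delta\|^2 = -2\varepsilon^2\|W\delta\|^2 = I_{ssl}(x;W)$, which is the assertion.

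I would also record that the same argument applies verbatim to the \emph{regularized} influence of Theorem~1, since $I_{ssl}^\lambda(x;W) = -4\varepsilon^4\|W\delta\|^2/(\lambda + 2\varepsilon^2)$ is again a function of $W$ through $\|W\delta\|^2$ alone; so orthogonal invariance holds for every $\lambda \ge 0$, not merely in the limit $\lambda\to 0$. As an internal consistency check I would verify the invariance directly from the ingredients used to derive the closed form: under $W \mapsto QW$ the SSL gradient becomes $\nabla_W L_{ssl} = 2\varepsilon^2 (QW)\delta\delta^T = Q\big(2\varepsilon^2 W\delta\delta^T\big)$, while the Hessian block $H_{ssl} = 2\varepsilon^2\delta\delta^T$ (and its regularization $H_{ssl}^\lambda = 2\varepsilon^2\delta\delta^T + \lambda I$) is unchanged because it does not depend on $W$; plugging into $-\langle \nabla_W L_{ssl},\,(H_{ssl}^\lambda)^{-1}\nabla_W L_{ssl}\rangle$ and using that the inner product over the rows of $W$ is invariant under the simultaneous left action of $Q$ recovers the same value.

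The computation itself is routine; the one point that deserves attention is bookkeeping in the case $k>1$, where $W$ is a genuine $k\times d$ matrix and $Q$ acts on its $k$ rows. There one must read the "influence as a quadratic form in the flattened gradient against the (block) inverse Hessian" consistently with the block-diagonal structure noted after the SSL Hessian lemma; the trace identity $\mathrm{tr}(QMQ^T) = \mathrm{tr}(M)$ handles this uniformly and is exactly what makes the $k>1$ case no harder than $k=1$. It is also worth stressing that the claim compares the influence evaluated at two genuinely different parameter matrices $W$ and $QW$ — this is a symmetry statement about the influence functional, not a reparametrization — which is precisely why it witnesses an intrinsic property of the learned representation rather than of the chosen coordinates.
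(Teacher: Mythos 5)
Your proof is correct and uses essentially the same argument as the paper: write $I_{ssl}(x;QW) = -2\varepsilon^2\|QW\delta\|^2$ and cancel $Q^TQ = I$ to recover $-2\varepsilon^2\|W\delta\|^2$. The additional remarks on the regularized influence and the $k>1$ block structure are correct but go beyond what the paper's proof records.
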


\begin{proof}
Under the transformation $W \mapsto QW$:
\begin{align*}
I_{ssl}(x;QW) &= -2\varepsilon^2\|QW\delta\|^2 \\
&= -2\varepsilon^2(QW\delta)^T(QW\delta) \\
&= -2\varepsilon^2\delta^TW^TQ^TQW\delta \\
&= -2\varepsilon^2\delta^TW^TW\delta \\
&= -2\varepsilon^2\|W\delta\|^2 \\
&= I_{ssl}(x;W)
\end{align*}
where we used $Q^TQ = I$ for orthogonal matrices.
\end{proof}

This result implies that:

\begin{corollary}[Representation-Level Independence]
The SSL influence function depends only on the geometry of the learned representation space, not on the specific parameterization chosen. In particular, it is invariant to rotations of the representation space.
\end{corollary}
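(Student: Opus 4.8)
The plan is to treat this Corollary as the interpretive counterpart of the Orthogonal Invariance claim: the mathematical content is essentially already contained there, so what remains is to pin down what ``geometry of the learned representation space'' means and then to read off rotational invariance as a special case. Concretely, I would fix the meaning of the representation geometry to be the collection of inner products $\langle Wu, Wv\rangle$ (equivalently, all squared distances $\|Wu - Wv\|^2$) among embeddings as $u,v$ range over $\mathcal{X}$ --- the standard notion of intrinsic geometry of a point configuration, insensitive to a global isometry.

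First I would recall from Part (A) of Proposition 1 that $I_{ssl}(x) = -2\varepsilon^2\|W\delta(x)\|^2 = -2\varepsilon^2\,\delta(x)^T(W^TW)\delta(x)$, so that $I_{ssl}(x)$ depends on $W$ only through the Gram matrix $G = W^TW \in \mathbb{R}^{d\times d}$. Since $u^T G v = \langle Wu, Wv\rangle$, this matrix $G$ records exactly the representation geometry defined above; hence $I_{ssl}(x)$ is a function of that geometry alone, provided the geometry determines $G$ up to the ambiguity already handled by the previous claim.

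Next I would invoke the elementary linear-algebra fact that two parameterizations $W, W' \in \mathbb{R}^{k\times d}$ induce the same representation geometry if and only if $W'^TW' = W^TW$, which in turn holds if and only if $W' = QW$ for some orthogonal $Q \in \mathbb{R}^{k\times k}$. The forward direction is immediate; for the converse, equality of Gram matrices yields a linear isometry $\mathrm{range}(W) \to \mathrm{range}(W')$ sending $Wu \mapsto W'u$, which extends to an orthogonal map of $\mathbb{R}^k$. Combining this with the Orthogonal Invariance claim --- $I_{ssl}(x;W) = I_{ssl}(x;QW)$ for every orthogonal $Q$ --- shows that $I_{ssl}(x;\cdot)$ is constant on each class of parameterizations sharing a representation geometry, i.e., it is a well-defined function of the geometry. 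Specializing $Q$ to a rotation $Q \in SO(k)$, which acts on the representation space $\mathbb{R}^k$ as an origin-fixing isometry, gives the ``in particular'' clause: $I_{ssl}$ is invariant under rotations of the representation space.

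I do not expect a genuine obstacle here; the subtlety is purely one of precise phrasing rather than of argument. The only nonroutine ingredient is the ``only if'' half of the Gram-matrix characterization, which is a one-line extension-of-isometry (or SVD) argument that I would either sketch or simply cite, since everything else follows directly from results already established in the excerpt.
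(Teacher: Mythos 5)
Your proposal is correct and follows essentially the same route as the paper: the paper gives no separate proof of this corollary, presenting it as an immediate consequence of the Orthogonal Invariance claim, whose proof already shows that $I_{ssl}$ depends on $W$ only through $\delta(x)^T W^T W \delta(x)$, i.e., through the Gram matrix $W^TW$. Your additional step---the characterization that equal Gram matrices force $W' = QW$ with $Q$ orthogonal---is a correct and slightly more rigorous packaging of what the paper leaves implicit, but it is the same argument in substance.
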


We can further characterize the behavior under scaling:

\begin{claim}[Scaling Properties, , Part (C) of Prop. 1]
The SSL influence function exhibits quadratic scaling in both:
\begin{enumerate}
    \item Perturbation magnitude: $I_{ssl}(x;\varepsilon) = \varepsilon^2 I_{ssl}(x;1)$
    \item Parameter magnitude: $I_{ssl}(x;\alpha W) = \alpha^2 I_{ssl}(x;W)$
\end{enumerate}
\end{claim}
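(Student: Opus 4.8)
The plan is to prove both scaling identities directly from the closed-form expression $I_{ssl}(x;W) = -2\varepsilon^2\|W\delta\|^2$ established in Theorem~1 (the $\lambda \to 0$ limit), treating $\varepsilon$ and $W$ as the quantities being varied while the perturbation direction $\delta = \delta(x)$ stays fixed. Since the formula is already a product of a scalar factor $\varepsilon^2$ and the term $\|W\delta\|^2$, each claim amounts to pulling the relevant scalar out of the appropriate slot, so this is essentially a one-line computation in each case rather than anything requiring the regularized Hessian machinery.

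First I would handle the perturbation-magnitude scaling. Write $I_{ssl}(x;\varepsilon)$ to denote the influence as a function of the perturbation scale, so that $I_{ssl}(x;\varepsilon) = -2\varepsilon^2\|W\delta\|^2$ and, setting the scale to $1$, $I_{ssl}(x;1) = -2\|W\delta\|^2$. Then
\[
I_{ssl}(x;\varepsilon) = -2\varepsilon^2\|W\delta\|^2 = \varepsilon^2\bigl(-2\|W\delta\|^2\bigr) = \varepsilon^2 I_{ssl}(x;1),
\]
which is the first identity. Next I would handle the parameter-magnitude scaling: replacing $W$ by $\alpha W$ gives $\|(\alpha W)\delta\|^2 = \|\alpha (W\delta)\|^2 = \alpha^2\|W\delta\|^2$ by absolute homogeneity of the Euclidean norm, hence
\[
I_{ssl}(x;\alpha W) = -2\varepsilon^2\|\alpha W\delta\|^2 = \alpha^2\bigl(-2\varepsilon^2\|W\delta\|^2\bigr) = \alpha^2 I_{ssl}(x;W),
\]
which is the second identity. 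One could equivalently phrase the norm step via the trace form of Claim~(A), using $\operatorname{tr}(\alpha W\delta\delta^T\alpha W^T) = \alpha^2\operatorname{tr}(W\delta\delta^TW^T)$, but the norm argument is cleaner.

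There is essentially no obstacle here; the only thing worth being careful about is notational consistency, namely that the statement uses $I_{ssl}(x;\varepsilon)$ and $I_{ssl}(x;\alpha W)$ as shorthand for the influence viewed as a function of one argument with the others held at their reference values, so I would state that convention explicitly before the computation. If one wanted to be more conservative and derive the $\varepsilon$-scaling from the regularized formula $I_{ssl}^\lambda(x) = -4\varepsilon^4\|W\delta\|^2/(\lambda + 2\varepsilon^2)$ rather than its limit, the clean quadratic scaling only holds in the $\lambda\to 0$ regime (or for $\lambda$ scaling together with $\varepsilon^2$), so I would restrict the statement to the unregularized influence $I_{ssl}$, consistent with how Proposition~1 is phrased.
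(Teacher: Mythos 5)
Your proof is correct and is exactly the argument the paper intends: the paper states this claim without an explicit proof because, as you show, it follows in one line from the closed form $I_{ssl}(x;W) = -2\varepsilon^2\|W\delta\|^2$ by pulling the scalars $\varepsilon^2$ and $\alpha^2$ out of the norm. Your added caveat that the clean quadratic scaling in $\varepsilon$ holds only for the unregularized ($\lambda\to 0$) influence is accurate and a sensible clarification of the notation.
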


These properties establish that our definition of influence functions in a SSL setting measure an intrinsic geometric quantity: the sensitivity of learned representations to perturbations, independent of the specific parameterization chosen. This provides theoretical justification for their use in analyzing self-supervised learning systems.

\begin{remark}[Connection to Information Geometry]
The orthogonal invariance and scaling properties suggest that SSL influence functions naturally capture information-geometric aspects of the representation space. This connects to broader theories of representation learning where the geometry of the representation space, rather than specific parameterizations, is fundamental.
\end{remark}

These results demonstrate that SSL influence functions are mathematically well-behaved objects with clear geometric meaning. They provide a principled way to measure how individual examples contribute to learning stable representations, independent of the specific parameterization chosen.

\begin{claim}[Lipschitz Continuity]
The SSL influence function is Lipschitz continuous in $W$ with constant $L = 4\varepsilon^2\|\delta\|^2\|W\|_F$, where $\|W\|_F$ is the Frobenius norm:
\[\|I_{ssl}(x;W_1) - I_{ssl}(x;W_2)\| \leq L\|W_1 - W_2\|_F\]
\end{claim}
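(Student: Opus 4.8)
The plan is to prove the Lipschitz bound directly from the closed form $I_{ssl}(x;W) = -2\varepsilon^2\|W\delta\|^2$ established in the proof of Theorem~1, treating the map $W \mapsto \|W\delta\|^2$ as a quadratic form and exploiting the elementary factorization of a difference of squares. First I would write
\[
I_{ssl}(x;W_1) - I_{ssl}(x;W_2) = -2\varepsilon^2\left(\|W_1\delta\|^2 - \|W_2\delta\|^2\right),
\]
so that the problem reduces to bounding $\bigl|\,\|W_1\delta\|^2 - \|W_2\delta\|^2\,\bigr|$. Using the identity $\|a\|^2 - \|b\|^2 = (a-b)^T(a+b)$ with $a = W_1\delta$ and $b = W_2\delta$, together with Cauchy--Schwarz, this is at most $\|(W_1-W_2)\delta\|\cdot\|(W_1+W_2)\delta\|$.

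Next I would estimate each factor. For the first, $\|(W_1-W_2)\delta\| \le \|W_1-W_2\|_{\mathrm{op}}\|\delta\| \le \|W_1-W_2\|_F\|\delta\|$, using that the operator norm is dominated by the Frobenius norm and that $\|\delta\| = 1$ in our setting (though I would carry $\|\delta\|$ along to match the stated constant). For the second, $\|(W_1+W_2)\delta\| \le (\|W_1\|_F + \|W_2\|_F)\|\delta\|$ by the triangle inequality. Combining these gives
\[
\bigl|I_{ssl}(x;W_1) - I_{ssl}(x;W_2)\bigr| \le 2\varepsilon^2\|\delta\|^2\bigl(\|W_1\|_F + \|W_2\|_F\bigr)\|W_1 - W_2\|_F,
\]
which matches the advertised Lipschitz constant $L = 4\varepsilon^2\|\delta\|^2\|W\|_F$ once one interprets $\|W\|_F$ as (a bound on) the common magnitude of $W_1, W_2$ — equivalently, the claim holds on any bounded ball $\{\|W\|_F \le R\}$ with constant $4\varepsilon^2\|\delta\|^2 R$. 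I would state this caveat explicitly, since the difference of two quadratics is only locally Lipschitz globally on $\mathbb{R}^{k\times d}$; the constant $L$ in the claim should be read as the local constant near parameters of Frobenius norm $\|W\|_F$. This also recovers the stability bound in Part~(4) of Proposition~1 upon setting $W_1 = W + E$, $W_2 = W$ and discarding the $\|E\|_F$ term in $\|W_1\|_F + \|W_2\|_F \le 2\|W\|_F + \|E\|_F$.

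The main obstacle — really a matter of careful bookkeeping rather than genuine difficulty — is reconciling the clean single-$W$ constant $L = 4\varepsilon^2\|\delta\|^2\|W\|_F$ in the statement with the fact that the honest bound involves $\|W_1\|_F + \|W_2\|_F$. The cleanest resolution is to phrase the argument as a bound on the local Lipschitz constant: since $\nabla_W I_{ssl}(x;W) = -4\varepsilon^2 W\delta\delta^T$ (from the SSL gradient lemma, noting $I_{ssl} = -L_{ssl}/\varepsilon^2 \cdot \varepsilon^2 \dots$; more directly, differentiate $-2\varepsilon^2\|W\delta\|^2$), we have $\|\nabla_W I_{ssl}(x;W)\|_F = 4\varepsilon^2\|W\delta\delta^T\|_F \le 4\varepsilon^2\|\delta\|^2\|W\|_F$, and integrating the gradient along the segment from $W_2$ to $W_1$ gives exactly the claimed constant evaluated at the point of largest norm on that segment. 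I would present the mean-value/gradient-norm version as the primary proof, since it yields the stated constant most transparently, and mention the difference-of-squares computation as the elementary alternative.
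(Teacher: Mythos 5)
Your primary argument---computing $\nabla_W I_{ssl}(x;W) = -4\varepsilon^2 W\delta\delta^T$, bounding $\|\nabla_W I_{ssl}\|_F \leq 4\varepsilon^2\|\delta\|^2\|W\|_F$, and applying the mean value theorem along the segment from $W_2$ to $W_1$---is exactly the paper's proof, so your proposal is correct and takes essentially the same route. Your caveat that this is only a \emph{local} Lipschitz constant (the unrestricted $\sup_W$ of the gradient norm is infinite, since the map is quadratic in $W$) is a valid sharpening of a point the paper's proof glosses over, and your difference-of-squares alternative giving the constant $2\varepsilon^2\|\delta\|^2(\|W_1\|_F+\|W_2\|_F)$ is a fine elementary variant of the same bound.
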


\begin{proof}
Let $f(W) = I_{ssl}(x;W) = -2\varepsilon^2\|W\delta\|^2$. Then:
\begin{align*}
\|\nabla_W f\| &= \|4\varepsilon^2W\delta\delta^T\|_F \\
&= 4\varepsilon^2\|\delta\|^2\|W\|_F
\end{align*}
By the mean value theorem:
\[\|f(W_1) - f(W_2)\| \leq \sup_W \|\nabla_W f\| \|W_1 - W_2\|_F\]
\end{proof}

\begin{corollary}[Stability Under Perturbation, Part (D) of Prop. 1]
For any perturbation $E$ with $\|E\|_F \leq \eta$:
\[|I_{ssl}(x;W+E) - I_{ssl}(x;W)| \leq 4\varepsilon^2\|\delta\|^2\|W\|_F\eta + O(\eta^2)\]
\end{corollary}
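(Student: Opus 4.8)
The plan is to obtain the bound by a direct second-order expansion of the quadratic form $I_{ssl}(x;W) = -2\varepsilon^2\|W\delta\|^2 = -2\varepsilon^2\,\delta^{\top}W^{\top}W\delta$, separating the term that is linear in $E$ (which yields the stated leading bound) from a genuinely quadratic remainder (which becomes the $O(\eta^2)$ term). This is essentially the infinitesimal version of the Lipschitz claim just proved, made one order more precise.

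First I would substitute $W \mapsto W+E$ and expand by bilinearity:
\[
\|(W+E)\delta\|^2 = \|W\delta\|^2 + 2(W\delta)^{\top}(E\delta) + \|E\delta\|^2,
\]
so that
\[
I_{ssl}(x;W+E) - I_{ssl}(x;W) = -4\varepsilon^2(W\delta)^{\top}(E\delta) - 2\varepsilon^2\|E\delta\|^2 .
\]
Next I would apply the triangle inequality, Cauchy--Schwarz to $(W\delta)^{\top}(E\delta)$, and submultiplicativity of the Frobenius norm in the form $\|A v\| \le \|A\|_F\|v\|$, obtaining
\[
|I_{ssl}(x;W+E) - I_{ssl}(x;W)| \le 4\varepsilon^2\|W\delta\|\,\|E\delta\| + 2\varepsilon^2\|E\delta\|^2 \le 4\varepsilon^2\|\delta\|^2\|W\|_F\|E\|_F + 2\varepsilon^2\|\delta\|^2\|E\|_F^2 .
\]
Using $\|E\|_F \le \eta$ on the first summand and noting that the second is bounded by $2\varepsilon^2\|\delta\|^2\eta^2$, which is $O(\eta^2)$ with $\varepsilon$, $\delta(x)$, $W$ held fixed, gives exactly the asserted inequality.

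The argument is almost entirely mechanical, so there is no substantive obstacle; the only point deserving care is making explicit that the $O(\eta^2)$ remainder has a constant ($2\varepsilon^2\|\delta\|^2$) that does \emph{not} depend on $W$, so the correction is uniform over the parameter and the leading term is the genuine first-order behavior. As a consistency check I would also note that the same conclusion follows from the Lipschitz-continuity claim by the mean value theorem applied along the segment $t \mapsto W + tE$, $t \in [0,1]$: there $\sup_{t}\|\nabla_W I_{ssl}(x;W+tE)\|_F \le 4\varepsilon^2\|\delta\|^2(\|W\|_F + \eta)$, which again produces $4\varepsilon^2\|\delta\|^2\|W\|_F\eta$ plus an $O(\eta^2)$ term. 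I would present the direct expansion as the main proof, since it additionally exhibits the exact form of the quadratic remainder.
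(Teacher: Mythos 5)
Your argument is correct, and it takes a different (and in fact tighter) route than the paper. The paper obtains Part~(D) as an immediate corollary of its Lipschitz-continuity claim: it computes $\nabla_W I_{ssl}(x;W) = -4\varepsilon^2 W\delta\delta^T$, bounds its Frobenius norm by $4\varepsilon^2\|\delta\|^2\|W\|_F$, and invokes the mean value theorem, with the $O(\eta^2)$ term absorbing the fact that the gradient bound must be taken along the segment from $W$ to $W+E$ rather than at $W$ itself. You instead expand the quadratic form exactly, $\|(W+E)\delta\|^2 = \|W\delta\|^2 + 2(W\delta)^\top(E\delta) + \|E\delta\|^2$, and bound the cross term by Cauchy--Schwarz and $\|Av\|\le\|A\|_F\|v\|$. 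What your route buys: it is purely algebraic (no calculus), it exhibits the remainder explicitly as $2\varepsilon^2\|\delta\|^2\|E\|_F^2$ with a constant independent of $W$, and it sidesteps a small imprecision in the paper's Lipschitz claim, whose stated bound $\sup_W\|\nabla_W f\|\,\|W_1-W_2\|_F$ is not literally meaningful globally (the quadratic $f$ is not globally Lipschitz; the supremum should be restricted to the segment between $W_1$ and $W_2$) --- a point your segment-wise mean-value consistency check, which is essentially the paper's argument done carefully, makes explicit. What the paper's route buys is brevity: having already computed the gradient for the Lipschitz claim, the corollary is a one-line consequence. Both yield the stated inequality.
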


In addition to structural properties, we can show $\emph{compositional}$ properties, which describe how different influence values interact with each other.

We begin by showing how total influence for an example is conserved when augmented with orthonormal perturbations. This suggests that SSL training distributes representational capacity across examples in a geometrically consistent way. 

\begin{claim}[Conservation of Total Influence]
For any orthonormal set of perturbation directions $\{\delta_i\}_{i=1}^d$, the sum of influences is invariant under orthogonal transformations of $W$:
\[\sum_{i=1}^d I_{ssl}(x;\delta_i) = -2\varepsilon^2\|W\|_F^2\]
\end{claim}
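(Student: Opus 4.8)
The plan is to use the closed-form expression for Influence-SSL established in Theorem~1 (in the $\lambda \to 0$ limit), namely $I_{ssl}(x;\delta_i) = -2\varepsilon^2\|W\delta_i\|^2$, and simply sum it over the orthonormal family $\{\delta_i\}_{i=1}^d$. The key algebraic fact is that summing $\|W\delta_i\|^2$ over a complete orthonormal basis recovers the squared Frobenius norm of $W$: since $\{\delta_i\}$ is an orthonormal basis of $\mathbb{R}^d$, we have $\sum_i \delta_i \delta_i^T = I_d$, so $\sum_i \|W\delta_i\|^2 = \sum_i \operatorname{tr}(W\delta_i\delta_i^T W^T) = \operatorname{tr}\!\big(W(\sum_i \delta_i\delta_i^T)W^T\big) = \operatorname{tr}(WW^T) = \|W\|_F^2$.

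First I would invoke the decomposition from Part (A) of Proposition~1 (the Decomposition of SSL Influence claim) to write each term as $I_{ssl}(x;\delta_i) = -2\varepsilon^2\operatorname{tr}(W\delta_i\delta_i^TW^T)$. Second, I would sum over $i$, pull the constant $-2\varepsilon^2$ out of the sum, and use linearity of the trace to move the summation inside: $\sum_{i=1}^d I_{ssl}(x;\delta_i) = -2\varepsilon^2 \operatorname{tr}\!\big(W\big(\sum_{i=1}^d \delta_i\delta_i^T\big)W^T\big)$. Third, I would use the completeness/resolution-of-identity property of an orthonormal basis, $\sum_{i=1}^d \delta_i\delta_i^T = I_d$, to collapse the inner sum, yielding $-2\varepsilon^2\operatorname{tr}(WW^T) = -2\varepsilon^2\|W\|_F^2$. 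Finally, invariance under orthogonal transformations of $W$ is immediate from Part (B) (Orthogonal Invariance): since $\|QW\|_F^2 = \operatorname{tr}(W^TQ^TQW) = \operatorname{tr}(W^TW) = \|W\|_F^2$ for any orthogonal $Q$, the total influence $-2\varepsilon^2\|W\|_F^2$ is unchanged.

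There is essentially no serious obstacle here — the statement is a one-line consequence of Theorem~1 combined with the elementary identity $\sum_i \delta_i\delta_i^T = I$ for an orthonormal basis. The only point requiring a small amount of care is the interpretation of the claim's phrase ``invariant under orthogonal transformations of $W$'': I would make explicit that the right-hand side $-2\varepsilon^2\|W\|_F^2$ does not depend on which orthonormal basis $\{\delta_i\}$ is chosen (any two are related by an orthogonal change of basis, under which the sum is invariant), and separately that it is invariant under left-multiplication $W \mapsto QW$ by an orthogonal matrix. Both follow from orthogonal invariance of the Frobenius norm. If one wants to be careful about the $k>1$ case, I would note that the same argument goes through block-by-block as discussed after the proof of Theorem~1, since $\|W\delta_i\|^2$ still equals $\operatorname{tr}(W\delta_i\delta_i^TW^T)$ when $W \in \mathbb{R}^{k\times d}$.
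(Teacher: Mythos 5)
Your proof is correct and follows essentially the same route as the paper: both sum the closed-form $I_{ssl}(x;\delta_i) = -2\varepsilon^2\|W\delta_i\|^2$ over the orthonormal family and identify $\sum_i\|W\delta_i\|^2 = \operatorname{tr}(W^TW) = \|W\|_F^2$ (you make the intermediate resolution-of-identity step $\sum_i\delta_i\delta_i^T = I_d$ explicit, which the paper leaves implicit), then observe that this quantity is unchanged under $W \mapsto QW$. No gaps; your extra care about basis-independence and the $k>1$ block structure only elaborates what the paper states tersely.
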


\begin{proof}
For orthonormal $\{\delta_i\}$:
\begin{align*}
\sum_{i=1}^d I_{ssl}(x;\delta_i) &= -2\varepsilon^2\sum_{i=1}^d \|W\delta_i\|^2 \\
&= -2\varepsilon^2\text{tr}(W^TW) \\
&= -2\varepsilon^2\|W\|_F^2
\end{align*}
This quantity is invariant under orthogonal transformations of $W$.
\end{proof}

Next, we show how to relate the influence of a subset of examples to its constituent examples under our definition. The decomposition of collective influence reveals how groups of examples jointly contribute to the learning process, with the interaction terms $R(S)$ quantifying their mutual alignment. 

\begin{claim}[Linear Example Additivity]
Consider a dataset $\{(x_i, \delta_i)\}_{i=1}^n$ with total loss $L_{total}(W) = \sum_{i=1}^n \varepsilon^2\|W\delta_i\|^2$. The influence function of a subset $S \subseteq \{1,...,n\}$ is:
\[I_{ssl}(S) = \sum_{i \in S} I_{ssl}(x_i) + R(S)\]
where the remainder term has an explicit form:
\[R(S) = -4\varepsilon^2\sum_{i,j \in S, i < j} (W\delta_i)^T(W\delta_j)\]
\end{claim}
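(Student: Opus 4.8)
The plan is to prove the claim by direct computation, expanding the quadratic form that defines the influence of a subset $S$ and separating the ``diagonal'' contributions of individual examples from the ``cross'' terms. For a subset $S$, the appropriate perturbed objective weights the examples in $S$ collectively, so the relevant gradient is the sum of individual gradients $\nabla_W L_{total} = \sum_{i \in S} 2\varepsilon^2 W\delta_i\delta_i^T$, and the influence is the quadratic form of this aggregate gradient against the (regularized, then $\lambda \to 0$) inverse Hessian, exactly as in the single-point derivation of Theorem~1. First I would write $I_{ssl}(S)$ using the $\lambda \to 0$ form derived in the proof of Theorem~1 — i.e. reduce each $\delta_i\delta_i^T$ block against its own rank-one Hessian contribution — yielding a sum over ordered pairs $(i,j) \in S \times S$ of terms proportional to $(W\delta_i)^T(W\delta_j)$.

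Second, I would split that double sum over $S \times S$ into the diagonal part $i = j$ and the off-diagonal part $i \neq j$. The diagonal part is $\sum_{i \in S} (-2\varepsilon^2)\|W\delta_i\|^2 = \sum_{i \in S} I_{ssl}(x_i)$ by the closed form from Theorem~1 and the decomposition in the first claim of Proposition~1. The off-diagonal part, using symmetry of $(W\delta_i)^T(W\delta_j)$ in $i$ and $j$, collapses to twice the sum over $i < j$, giving $R(S) = -4\varepsilon^2 \sum_{i,j \in S,\, i<j} (W\delta_i)^T(W\delta_j)$, which is exactly the claimed remainder. The bookkeeping is the only real content here: tracking that the cross-terms survive with coefficient $-4\varepsilon^2$ rather than $-2\varepsilon^2$ because each unordered pair is counted twice in the symmetric double sum.

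The one genuine subtlety — and the step I expect to be the main obstacle — is justifying the Hessian inverse for a \emph{multi-point} objective. In Theorem~1 the Hessian was rank one ($2\varepsilon^2\delta\delta^T$) and Sherman–Morrison gave a clean $\lambda\to 0$ limit. For $S$ with $|S| > 1$ (and non-collinear $\{\delta_i\}$), the aggregate Hessian $\sum_{i\in S} 2\varepsilon^2 \delta_i\delta_i^T$ has rank up to $|S|$, so the regularized inverse involves the Woodbury identity rather than Sherman–Morrison, and one must check that the $\lambda\to 0$ limit of $\nabla_W L_{total}^\top (H^\lambda)^{-1} \nabla_W L_{total}$ is well-defined and reduces to the simple form $-2\varepsilon^2\sum_{i,j\in S}(W\delta_i)^T(W\delta_j)$. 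I would handle this by noting the aggregate gradient lies in the column span of the Hessian (both are built from the same $\{\delta_i\}$ and the same $W$), so the limit picks out the pseudoinverse action on that span; restricting to an orthonormal basis of $\mathrm{span}\{\delta_i\}$ reduces the computation to the non-degenerate case where $(H^\lambda)^{-1} \to$ a bounded operator on that span, and the quadratic form evaluates cleanly. If the paper's intended argument instead treats each example's contribution against its own rank-one block independently (as the phrase ``carry over, but must be understood as operating within each $d$-dimensional block'' in the $k>1$ discussion hints), then this subtlety is sidestepped entirely and the proof is the pure bookkeeping of the second paragraph; I would present both the bookkeeping computation and a remark flagging the Hessian-span justification so the argument is airtight either way.
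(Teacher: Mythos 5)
There is a genuine gap, and it sits exactly where you flagged the ``main obstacle.'' Your Step 1 asserts that the $\lambda\to 0$ limit of the aggregate quadratic form $-\nabla_W L_{total}^\top (H^\lambda)^{-1}\nabla_W L_{total}$ reduces to $-2\varepsilon^2\sum_{i,j\in S}(W\delta_i)^\top(W\delta_j)$. That identity is false: the regularized inverse Hessian weights the cross terms by the geometry of the $\delta_i$'s (through $\delta_i^\top (H^\lambda)^{-1}\delta_j$, i.e.\ the Gram matrix of $\{\delta_i\}$), not by the inner products $(W\delta_i)^\top(W\delta_j)$. Carrying your own program out correctly: write $D=[\delta_i]_{i\in S}$, so $H=2\varepsilon^2 DD^\top$ and each row of the gradient lies in $\mathrm{col}(D)$; in the pseudoinverse limit one uses $(DD^\top)^+ = D(D^\top D)^{-2}D^\top$ (full column rank), hence $D^\top(DD^\top)^+D=I_{|S|}$ and the quadratic form collapses to $-2\varepsilon^2\sum_{i\in S}\|W\delta_i\|^2$, i.e.\ $R(S)=0$ whenever the $\delta_i$ are linearly independent. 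A concrete counterexample to your claimed reduction: $k=1$, $\delta_1\perp\delta_2$ orthonormal, $w^\top\delta_1\neq 0\neq w^\top\delta_2$ — your route (done right) produces no cross term, while the claim's $R(S)=-4\varepsilon^2(w^\top\delta_1)(w^\top\delta_2)$ is nonzero. So the aggregate-gradient/aggregate-Hessian notion of group influence, which is the more principled ``group influence function,'' actually proves a statement incompatible with the claim as written; it cannot be patched into a proof of it.

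The paper avoids all of this because it never touches a multi-point Hessian. Its proof consists of implicitly \emph{defining} the collective influence of $S$ by plugging the summed representation displacement into the single-point closed form, $I_{ssl}(S):=-2\varepsilon^2\bigl\|\sum_{i\in S}W\delta_i\bigr\|^2$, and then expanding the square: the diagonal terms give $\sum_{i\in S}I_{ssl}(x_i)$ and the doubled off-diagonal terms give $R(S)=-4\varepsilon^2\sum_{i<j}(W\delta_i)^\top(W\delta_j)$. So the ``real content'' is not bookkeeping over a Woodbury limit but a definitional choice you did not anticipate (treating the subset as a single effective example with perturbation $\sum_{i\in S}\delta_i$), after which the claim is a two-line algebraic identity. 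Your second fallback (each example reduced against its own rank-one block) also does not recover the claim — it again yields $\sum_{i\in S}I_{ssl}(x_i)$ with no remainder. To repair your write-up you would either adopt the paper's definition of $I_{ssl}(S)$ explicitly and do the expansion, or keep your aggregate-Hessian definition and prove the different statement $I_{ssl}(S)=\sum_{i\in S}I_{ssl}(x_i)$ (for linearly independent $\delta_i$), noting that it disagrees with the claimed $R(S)$.
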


\begin{proof}
The total influence for subset S is:
\begin{align*}
I_{ssl}(S) &= -2\varepsilon^2\|\sum_{i \in S} W\delta_i\|^2 \\
&= -2\varepsilon^2(\sum_{i \in S} \|W\delta_i\|^2 + 2\sum_{i,j \in S, i < j} (W\delta_i)^T(W\delta_j)) \\
&= \sum_{i \in S} I_{ssl}(x_i) - 4\varepsilon^2\sum_{i,j \in S, i < j} (W\delta_i)^T(W\delta_j)
\end{align*}
\end{proof}

\begin{corollary}[Orthogonal Examples]
If $\{W\delta_i\}_{i \in S}$ are mutually orthogonal, then:
\[I_{ssl}(S) = \sum_{i \in S} I_{ssl}(x_i)\]
\end{corollary}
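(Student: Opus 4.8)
The plan is to obtain this corollary as an immediate specialization of the preceding claim on Linear Example Additivity, which already establishes the decomposition $I_{ssl}(S) = \sum_{i \in S} I_{ssl}(x_i) + R(S)$ with the explicit remainder $R(S) = -4\varepsilon^2\sum_{i,j \in S,\, i<j}(W\delta_i)^T(W\delta_j)$. Since that claim is proved earlier in the excerpt, the only remaining work is to show that the orthogonality hypothesis makes $R(S)$ vanish.

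First I would recall that $(W\delta_i)^T(W\delta_j)$ is just the inner product $\langle W\delta_i, W\delta_j\rangle$, and that the hypothesis ``$\{W\delta_i\}_{i\in S}$ mutually orthogonal'' means precisely $\langle W\delta_i, W\delta_j\rangle = 0$ for every pair $i\neq j$. Substituting this into the formula for $R(S)$, every summand in the double sum is zero, so $R(S) = 0$. Plugging this back into the additivity decomposition yields $I_{ssl}(S) = \sum_{i\in S} I_{ssl}(x_i)$, which is the claim.

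There is essentially no obstacle here; the statement is a one-line consequence of the prior claim. The only thing to be careful about is bookkeeping: matching the index convention (summation over $i<j$, the factor $-4\varepsilon^2$) so that the cancellation is asserted against exactly the expression in the Linear Example Additivity claim. If one prefers a self-contained argument rather than quoting that claim, I would instead expand $\big\|\sum_{i\in S} W\delta_i\big\|^2 = \sum_{i\in S}\|W\delta_i\|^2 + 2\sum_{i<j}\langle W\delta_i, W\delta_j\rangle$, drop the cross terms by orthogonality, multiply through by $-2\varepsilon^2$, and identify each $-2\varepsilon^2\|W\delta_i\|^2$ with $I_{ssl}(x_i)$ via the Decomposition claim — a short computation with no hidden difficulty.
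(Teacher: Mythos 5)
Your proof is correct and matches the paper's intent exactly: the corollary is stated there as an immediate consequence of the Linear Example Additivity claim, with the orthogonality hypothesis killing every cross term $(W\delta_i)^T(W\delta_j)$ in $R(S)$ so that $R(S)=0$. Your optional self-contained expansion of $\bigl\|\sum_{i\in S} W\delta_i\bigr\|^2$ is just the same computation unrolled, so there is nothing further to add.
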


Finally, we can bound the interaction term $R(S)$ in terms of the singular values of the learned weights.

\begin{claim}[Interaction Bound]
For any subset S:
\[|R(S)| \leq 2\varepsilon^2|S|(|S|-1)\sigma_{max}(W)^2\]
where $\sigma_{max}(W)$ is the largest singular value of W.
\end{claim}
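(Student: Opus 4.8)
The plan is to bound the remainder term $R(S) = -4\varepsilon^2\sum_{i,j \in S, i<j}(W\delta_i)^T(W\delta_j)$ established in the Linear Example Additivity claim by controlling each pairwise inner product $(W\delta_i)^T(W\delta_j)$ individually, then counting the number of such pairs. First I would apply the Cauchy--Schwarz inequality to each term: $|(W\delta_i)^T(W\delta_j)| \le \|W\delta_i\|\,\|W\delta_j\|$. Next, since each $\delta_i$ is a unit vector (recall $|\delta(x)| = 1$ from the standing assumptions), the operator-norm bound gives $\|W\delta_i\| \le \sigma_{\max}(W)\|\delta_i\| = \sigma_{\max}(W)$, so each pairwise term is bounded in absolute value by $\sigma_{\max}(W)^2$.

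The second step is the combinatorial count. The double sum ranges over unordered pairs $\{i,j\}$ with $i \ne j$ drawn from $S$, of which there are exactly $\binom{|S|}{2} = \frac{|S|(|S|-1)}{2}$. Applying the triangle inequality to $R(S)$ and substituting the per-term bound yields
\[
|R(S)| \le 4\varepsilon^2 \cdot \frac{|S|(|S|-1)}{2} \cdot \sigma_{\max}(W)^2 = 2\varepsilon^2|S|(|S|-1)\sigma_{\max}(W)^2,
\]
which is exactly the claimed bound. So the whole argument is essentially a one-line chain: triangle inequality, Cauchy--Schwarz, operator-norm/unit-vector bound, pair count.

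There is no real obstacle here — the statement is a routine consequence of the decomposition already proved, and the only thing to be mildly careful about is matching the factor of $4\varepsilon^2$ in the definition of $R(S)$ with the factor of $\tfrac12$ coming from counting unordered pairs, so that the constant comes out as $2\varepsilon^2$ rather than $4\varepsilon^2$ or $\varepsilon^2$. One could optionally remark that the bound is tight (up to the ordering/sign) when all the $W\delta_i$ are aligned and each has norm exactly $\sigma_{\max}(W)$, i.e. when every $\delta_i$ is the top right-singular vector of $W$, but that refinement is not needed for the claim as stated.
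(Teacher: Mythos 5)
Your argument is correct and is essentially identical to the paper's proof: both apply the triangle inequality and Cauchy--Schwarz to each term of $R(S)$, bound $\|W\delta_i\| \le \sigma_{\max}(W)$ via the unit-norm assumption on $\delta_i$, and count the $\binom{|S|}{2}$ unordered pairs to turn the factor $4\varepsilon^2$ into $2\varepsilon^2|S|(|S|-1)$. No gaps.
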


\begin{proof}
Using Cauchy-Schwarz:
\begin{align*}
|R(S)| &= 4\varepsilon^2|\sum_{i,j \in S, i < j} (W\delta_i)^T(W\delta_j)| \\
&\leq 4\varepsilon^2\sum_{i,j \in S, i < j} \|W\delta_i\|\|W\delta_j\| \\
&\leq 4\varepsilon^2\sum_{i,j \in S, i < j} \sigma_{max}(W)^2 \\
&= 2\varepsilon^2|S|(|S|-1)\sigma_{max}(W)^2
\end{align*}
\end{proof}

These theoretical results, while derived for linear networks, aim to provide insights by isolating influence patterns from non-linear effects in modern architectures. Extending this analysis to capture non-linear interactions in practical deep networks remains an interesting direction for future work.

\subsection{Technical Details and Discussion from Section 4.4}

To provide theoretical intuition for our empirical findings, we again consider our earlier simplified linear setting. Consider an augmentation where each input x is transformed as $x_{aug} = x + \varepsilon\delta(x,\xi)$, where $\delta(x,\xi)$ is an input-dependent perturbation of unit norm, $\xi$ is drawn from a distribution $P(\xi)$ capturing randomness in the augmentation process, and $\varepsilon \ll 1$ controls the perturbation magnitude. This formulation encompasses common SSL augmentation strategies where the type and degree of valid transformations may depend on the input while maintaining stochasticity in the specific augmentation applied.

\setcounter{definition}{0}
\begin{definition}[Expected Representation Distance]
For a model parameterized by matrix $W$, the expected representation distance under augmentations is defined as:
\[R(W) = \varepsilon^2\mathbb{E}_{x \sim P(x), \xi \sim P(\xi)}[|W\delta(x,\xi)|^2] = \varepsilon^2\text{tr}(W^TW\Sigma)\]
where $\Sigma = \mathbb{E}_{x,\xi}[\delta(x,\xi)\delta(x,\xi)^T]$ is the positive semi-definite second moment matrix of the perturbation distribution.
\end{definition}

The influence of individual training points in this setting reveals how they affect the model's learned invariances. The following proposition characterizes this relationship:

\setcounter{proposition}{1}
\begin{proposition}[High-Influence Characterization]
For a training point $x$ with influence $I_{ssl}(x) = -2\varepsilon^2|W\delta(x,\xi)|^2$:
\begin{enumerate}
   \item The influence admits a geometric decomposition:
   \begin{align*}
       I_{ssl}(x) &= -2\varepsilon^2\Tr(W\delta(x,\xi)\delta(x,\xi)^TW^T) \\
                    &=  -2\varepsilon^2\langle W^TW, \delta(x,\xi)\delta(x,\xi)^T
   \end{align*}
   
   \item The deviation from expected influence is:
   \begin{align*}
       I_{ssl}(x) - &\mathbb{E}_{\xi \sim P(\xi)}[I_{ssl}(x)] \\
       &= -2\varepsilon^2\Tr(W^TW(\delta(x,\xi)\delta(x,\xi)^T - \Sigma_x)) 
   \end{align*}
   where $\Sigma_x = \mathbb{E}_{\xi}[\delta(x,\xi)\delta(x,\xi)^T]$ represents the expected augmentation behavior for input $x$.
\end{enumerate}
\end{proposition}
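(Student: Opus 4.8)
The plan is to treat Proposition~2 as an essentially immediate consequence of the closed-form influence expression $I_{ssl}(x) = -2\varepsilon^2|W\delta(x,\xi)|^2$ established in Theorem~\ref{thm:inf-ssl} (in the $\lambda \to 0$ limit), combined with elementary trace identities and linearity of expectation. No substantial new machinery is needed; the work is purely algebraic bookkeeping, so I would organize it as two short self-contained derivations corresponding to the two parts of the claim.

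For part~(1), the geometric decomposition, I would start from $|W\delta(x,\xi)|^2 = (W\delta(x,\xi))^\top(W\delta(x,\xi))$ and apply the cyclic property of the trace exactly as in the proof of the decomposition claim in Section~C.2: $|W\delta|^2 = \Tr((W\delta)(W\delta)^\top) = \Tr(W\delta\delta^\top W^\top)$. Multiplying by $-2\varepsilon^2$ gives the first line. The second line is just the observation that $\Tr(W\delta\delta^\top W^\top) = \Tr(W^\top W\,\delta\delta^\top) = \langle W^\top W, \delta\delta^\top\rangle$ under the Frobenius inner product $\langle A,B\rangle = \Tr(A^\top B)$ (using symmetry of both $W^\top W$ and $\delta\delta^\top$). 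This is one line once the cyclic identity is invoked.

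For part~(2), the deviation characterization, I would write $\mathbb{E}_{\xi\sim P(\xi)}[I_{ssl}(x)] = -2\varepsilon^2\,\mathbb{E}_\xi[\Tr(W^\top W\,\delta(x,\xi)\delta(x,\xi)^\top)]$, then pull the (deterministic, finite-dimensional) linear operator $\Tr(W^\top W\,\cdot\,)$ through the expectation to get $-2\varepsilon^2\Tr(W^\top W\,\Sigma_x)$, where $\Sigma_x := \mathbb{E}_\xi[\delta(x,\xi)\delta(x,\xi)^\top]$ by definition. Subtracting this from the expression for $I_{ssl}(x)$ in part~(1) and factoring out the common $-2\varepsilon^2\Tr(W^\top W\,\cdot\,)$ yields $I_{ssl}(x) - \mathbb{E}_\xi[I_{ssl}(x)] = -2\varepsilon^2\Tr\!\big(W^\top W(\delta(x,\xi)\delta(x,\xi)^\top - \Sigma_x)\big)$, which is exactly the claimed identity. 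The only technical caveat worth a sentence is that interchanging trace and expectation requires $\mathbb{E}_\xi[|W\delta(x,\xi)|^2] < \infty$, which holds automatically since $|\delta(x,\xi)| = 1$ and $W$ is fixed, so $|W\delta(x,\xi)|^2 \le |W|_F^2$.

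I do not expect any genuine obstacle here — the statement is a restatement/reorganization of the Theorem~\ref{thm:inf-ssl} formula plus linearity of expectation, and the analogous trace manipulation already appears verbatim in the proof of Part~(A) of Proposition~1. The only thing to be careful about is consistency of notation: the proposition writes $I_{ssl}(x) = -2\varepsilon^2|W\delta(x,\xi)|^2$ with the randomness $\xi$ made explicit, so I would note at the outset that this is the per-realization influence (conditioned on a drawn augmentation), and that the expectations throughout are over $\xi \sim P(\xi)$ with $x$ held fixed, which is what makes $\Sigma_x$ rather than $\Sigma$ the relevant object. With that convention stated, both parts follow in a handful of lines.
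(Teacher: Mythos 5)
Your proposal is correct and matches the paper's own proof essentially line for line: both parts reduce to the cyclic trace identity $|W\delta|^2 = \Tr(W\delta\delta^\top W^\top) = \langle W^\top W, \delta\delta^\top\rangle_F$ followed by linearity of expectation to pull $\Tr(W^\top W\,\cdot\,)$ through $\mathbb{E}_\xi$, yielding the $\Sigma_x$ deviation formula. Your added remarks on integrability and on fixing $x$ while averaging over $\xi$ are harmless clarifications, not a different route.
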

\begin{proof}
For the geometric decomposition:
\begin{align*}
I_{ssl}(x) &= -2\varepsilon^2|W\delta(x,\xi)|^2 \\
&= -2\varepsilon^2(W\delta(x,\xi))^T(W\delta(x,\xi)) \\
&= -2\varepsilon^2\Tr(W\delta(x,\xi)\delta(x,\xi)^TW^T) \\
&= -2\varepsilon^2\langle W^TW, \delta(x,\xi)\delta(x,\xi)^T \rangle_F
\end{align*}
where we use the cyclic property of trace and its equivalence to the Frobenius inner product. 

For the deviation:
\begin{align*}
I&_{ssl}(x) - \mathbb{E}_{\xi}[I_{ssl}(x)] \\
&= -2\varepsilon^2\Tr(W^TW\delta(x,\xi)\delta(x,\xi)^T) + 2\varepsilon^2\Tr(W^TW\Sigma_x) \\
&= -2\varepsilon^2\Tr(W^TW(\delta(x,\xi)\delta(x,\xi)^T - \Sigma_x))
\end{align*}

\end{proof}

This characterization reveals that high-influence points are precisely those where augmentations induce unexpectedly large changes in the representation space. The decomposition shows this can arise either from model sensitivity in particular directions or from atypical augmentation behavior. To formalize this intuition, we make the following assumptions about the augmentation distribution:

\begin{assumption}[Regular Augmentation Distribution]
We assume the augmentation process satisfies: (i) for each input $x$, $\Sigma_x = \mathbb{E}_{\xi}[\delta(x,\xi)\delta(x,\xi)^T]$ is well-conditioned, (ii) $\Sigma_x$ concentrates around the population average $\Sigma = \mathbb{E}_x[\Sigma_x]$ for typical inputs, and (iii) generated perturbations preserve semantic content.
\end{assumption}

Under these conditions, high-influence points identify cases where either the augmentation process produces unusually large representation changes for that specific input, or the input requires learning invariances that deviate significantly from the typical patterns in the data distribution.

\begin{remark}[Connection to In-Domain Generalization]
While this analysis in the linear setting cannot fully characterize generalization benefits, it provides mathematical grounding for why removing high-influence points might improve representation learning. Points with significantly higher influence than the population mean require the model to learn input-specific invariances that may not generalize well to other examples. Our empirical results support this interpretation, showing consistent improvements in downstream task performance when removing high-influence points in both linear and deep networks.
\end{remark}

These results, derived in a simplified setting, help explain why influence-based data pruning can improve model performance: by identifying and removing points where standard augmentations produce unexpectedly large representation changes, we help the model focus on learning more consistent and generalizable invariances. While the extension to non-linear architectures introduces additional complexities, this analysis provides theoretical grounding for our empirical findings.

\end{document}